\documentclass{article}
\PassOptionsToPackage{numbers, compress}{natbib}

\usepackage[preprint]{neurips_2026}

\usepackage[utf8]{inputenc} 
\usepackage[T1]{fontenc}    
\usepackage{url}         
\usepackage{booktabs}       
\usepackage{amsfonts}    
\usepackage{nicefrac}      
\usepackage{microtype}      
\usepackage{xcolor}       

\usepackage{microtype}
\usepackage{graphicx}
\usepackage{subcaption}
\usepackage{booktabs} 

\usepackage[utf8]{inputenc}
\usepackage{url}          
\usepackage{amsfonts}       
\usepackage{nicefrac}    
\usepackage[dvipsnames]{xcolor}   
\usepackage[backref=page]{hyperref}
\usepackage{wrapfig,lipsum,booktabs}
\usepackage{booktabs}
\usepackage{multirow}
\usepackage{url}
\usepackage{tikz}
\usetikzlibrary{arrows}
\usepackage{paralist}
\usepackage[stable]{footmisc}
\usepackage[textwidth=0.7in, textsize=tiny]{todonotes}
\usepackage{ifthen}
\usepackage{rotating}
\usepackage{forest}
\usepackage{xparse}
\usepackage[normalem]{ulem}
\usepackage{bbm}
\usepackage{natbib}

\usepackage{enumitem}
\usepackage{adjustbox}
\usepackage{diagbox}

\usepackage{bm}
\usepackage{amsmath} 
\usepackage{amsthm}
\usepackage{amsfonts}
\usepackage{amssymb}
\usepackage{mathtools}
\usepackage{fixmath}
\usepackage{siunitx}
\usepackage{xfrac}

\usepackage{arydshln}

\usepackage{thmtools}		
\usepackage{mleftright}
\usepackage{stmaryrd}
\usepackage{nicefrac}
\usepackage{fontawesome5}
\usepackage{graphicx}

\usepackage{hyperref}   
\usepackage[capitalize,noabbrev]{cleveref}

\theoremstyle{plain}
\newtheorem{theorem}{Theorem}[section]
\newtheorem{proposition}[theorem]{Proposition}
\newtheorem{lemma}[theorem]{Lemma}

\theoremstyle{definition}

\theoremstyle{remark}

\usepackage{appendix}
\usepackage{caption}

\usepackage{algorithm}
\usepackage{algpseudocode}

\newcommand{\hb}{\mathbold{h}}

\newcommand{\UPD}{\mathsf{UPD}}
\newcommand{\AGG}{\mathsf{AGG}}

\newcommand{\new}[1]{\emph{#1}}

\newcommand{\trans}{^\intercal}
\renewcommand{\vec}[1]{\mathbold{#1}}

\newcommand{\xhdr}[1]{{\noindent\bfseries #1}}

\usepackage{bm}

\definecolor{cb-black}      {RGB}{  0,   0,   0}
\definecolor{cb-blue-green} {RGB}{  0,  073,  073}
\definecolor{cb-rose}       {RGB}{255, 109, 182}
\definecolor{cb-salmon-pink}{RGB}{255, 182, 119}
\definecolor{cb-purple}     {RGB}{ 73,   0, 146}
\definecolor{cb-blue}       {RGB}{ 0, 109, 219}
\definecolor{cb-lilac}      {RGB}{182, 109, 255}
\definecolor{cb-blue-sky}   {RGB}{109, 182, 255}
\definecolor{cb-blue-light} {RGB}{182, 219, 255}
\definecolor{cb-burgundy}   {RGB}{146,   0,   0}
\definecolor{cb-brown}      {RGB}{146,  73,   0}
\definecolor{cb-clay}       {RGB}{219, 209,   0}
\definecolor{cb-green-lime} {RGB}{ 36, 255,  36}
\definecolor{cb-yellow}     {RGB}{255, 255, 109}

\definecolor{cb-green-sea}{HTML}{0173B2}
\definecolor{cb-burgundy}{HTML}{029E73}
\definecolor{cb-lilac}{HTML}{D55E00}

\definecolor{first}{HTML}{029E73}
\definecolor{second}{HTML}{0173B2}
\definecolor{third}{HTML}{D55E00}
\definecolor{fourth}{HTML}{8000ff}

\definecolor{sns_cb1}{HTML}{029E73}
\definecolor{sns_cb2}{HTML}{0173B2}
\definecolor{sns_cb3}{HTML}{D55E00}
\definecolor{sns_cb4}{HTML}{CC78BC}
\definecolor{sns_cb5}{HTML}{ECE133}
\definecolor{sns_cb6}{HTML}{56B4E9}

\newcommand{\WLk}[1]{#1\text{-}\mathsf{WL}}
\newcommand{\FWLk}[1]{#1\text{-}\mathsf{FWL}}

\newcommand{\vctfwl}{\mathsf{VC}\text{-2-}\mathsf{FWL}}
\newcommand{\mcwl}{\mathsf{MC}\text{-}\mathsf{WL}}
\newcommand{\mcmpnn}{\mathsf{MC}\text{-}\mathsf{MPNN}}

\newcommand{\dmcwl}{\delta \text{-} \mathsf{MC} \text{-}\mathsf{WL}}
\newcommand{\dmpnn}{\delta \text{-} \mathsf{MC} \text{-}\mathsf{MPNN}}

\newcommand{\ip}[2]{\left\langle #1,\,#2\right\rangle}

\newcommand{\Proj}{\mathrm{Proj}}
\newcommand{\Acal}{\mathcal{A}}

\NewDocumentCommand{\dgal}{sO{}m}{%
  \IfBooleanTF{#1}
    {\dgalext{#3}}
    {\dgalx[#2]{#3}}%
}

\NewDocumentCommand{\dgalx}{om}{%
  \sbox0{\mathsurround=0pt$#1\{$}%
  \sbox2{\{}%
  \ifdim\ht0=\ht2
    \{\kern-.45\wd2 \{#2\}\kern-.45\wd2 \}%
  \else
    \mathopen{#1\{\kern-.5\wd0 #1\{}
    #2
    \mathclose{#1\}\kern-.5\wd0 #1\}}
  \fi
}

\title{Solving Max-Cut to Global Optimality via Feasibility-Preserving Graph Neural Networks}

\author{%
Hao Chen$^{1*}$ \quad
Chendi Qian$^{2*}$ \quad
Christopher Morris$^2$ \quad
Andrea Lodi$^3$ \quad
Can Li$^1$ \\
$^1$Davidson School of Chemical Engineering, Purdue University \\
$^2$Faculty of Computer Science, RWTH Aachen University \\
$^3$Jacobs Technion-Cornell Institute, Cornell Tech \\
\texttt{\{chen4433, canli\}@purdue.edu} \\
\texttt{\{chendi.qian@log, morris@cs\}.rwth-aachen.de} \\
\texttt{andrea.lodi@cornell.edu} \\
$^*$Equal contribution.
}

\begin{document}

\maketitle

\begin{abstract}
Exact solution of hard combinatorial optimization problems often relies on strong convex relaxations, but solving these relaxations repeatedly inside a branch-and-bound algorithm can be prohibitively expensive. Hence, we consider this challenge for \new{Max-Cut problem} (Max-Cut), where branch and bound commonly uses semidefinite programming (SDP) relaxations to bound subproblems. We propose a Max-Cut-specific \new{graph neural network} that serves as a principled, lightweight neural proxy for these SDP solvers and can be plugged directly into an exact branch-and-bound framework. The proposed architecture has update steps of complexity $\mathcal{O}(n^2 + ne)$, and predicts both primal- and dual-feasible SDP solutions. The primal SDP solutions yield feasible Max-Cut solutions via the Goemans--Williamson algorithm. In addition, it is trained in a self-supervised fashion without requiring solved SDP relaxations as labels. Empirically, we show that our architecture can substantially reduce the cost of bounding in exact Max-Cut solving by up to $10.6 \times$ compared with using the state-of-the-art SDP solver Mosek. Our work highlights the potential of learned, validity-preserving surrogates for accelerating exact optimization over structured convex relaxations.
\end{abstract}

\section{Introduction}

\new{Combinatorial optimization} (CO) is a field at the intersection of optimization, operations research, discrete mathematics, and computer science, with many high-impact real-world applications, including routing, scheduling, and network design~\citep{Kor+2012}. In practice, however, optimization instances are rarely arbitrary, i.e., they typically arise from structured, application-specific instance distributions. Classical exact solvers are largely distribution-agnostic and therefore do not explicitly exploit recurring structure in the instances they encounter~\citep{bengio2021machine}.

This observation has motivated a growing line of work on combining exact optimization with data-driven prediction. The central idea is to leverage machine learning to infer useful algorithmic information from previously seen instances while preserving the correctness guarantees of the underlying solver. In this context, \new{graph neural networks} (GNNs), and \new{message-passing neural networks} (MPNNs)~\citep{Sca+2009,Gil+2017} in particular, are a natural choice, i.e., they are permutation-equivariant, naturally represent variable-constraint interactions, and can process instances of varying size. Here, prior work has shown encouraging results for linear programming~\citep{chen2022representing,qian2024exploring} and mixed-integer linear programming, e.g., in branching~\citep{chen2024rethinking,Gasse2019,Nai+2020}, primal solution prediction~\citep{khalil2022mip}, and cut selection~\citep{Dez+2023,Li+2023,Paulus2022}; see~\citet{Cap+2023} for thorough survey.

Many practically important optimization problems, however, go beyond the linear setting. In particular, \new{semidefinite programming} (SDP) relaxations play a central role in hard discrete optimization problems, yet repeatedly solving SDPs inside an exact branch-and-bound algorithm remains computationally demanding \citep{vandenberghe1996semidefinite,wolkowicz2012handbook,wang2024tuningfree,han2025low,rendl2007branch}. Here, we study the \new{Max-Cut problem} (Max-Cut), a canonical \textsf{NP}-complete problem with numerous applications, e.g., in statistical physics and circuit layout design~\citep{barahona1988application,poljak1995maxcutsurvey}. Exact solution of Max-Cut can be achieved via branch and bound, where each search-tree node is bounded by solving the SDP relaxation of the corresponding subproblem. Although these SDP bounds are often strong, recomputing them throughout the search is typically the primary computational bottleneck, even with state-of-the-art SDP solvers such as Mosek~\citep{mosek}. 

\paragraph{Contributions} Hence, our goal is to replace the expensive SDP solvers with a learned neural proxy that remains compatible with exact optimization. Building on recent principled progress on solving SDPs with expressive GNNs~\citep{qian2026gnnsdp}, we propose a sparsity-aware Max-Cut-specific GNN architecture that predicts feasible primal and dual solutions to the SDP relaxation and integrates directly into an exact branch-and-bound framework. Unlike most prior learning-to-optimize approaches, which learn policies to \emph{guide} branch and bound while leaving the bounding step to a conventional solver~\citep{Gasse2019,khalil2022mip,Paulus2022}, our method targets the bounding computation itself. That is, by learning valid bounds rather than relying solely on heuristic decisions, we preserve the performance guarantees of the exact algorithm while substantially reducing overall computing time; see~\cref{fig:overview} for an overview of our proposed framework.

Concretely, our main contributions are as follows.
\begin{enumerate}
    \item We propose a Max-Cut-specific GNN architecture with update step complexity $\mathcal{O}(n^2 + ne)$, where $n$ denotes the number of vertices and $e$ the number of edges, reducing the $\mathcal{O}(n^3)$ bottleneck of prior work \citep{qian2026gnnsdp}.
    
    \item Our architecture predicts both primal-feasible and dual-feasible solutions for the Max-Cut SDP relaxation. Feasibility is enforced by construction, so the predicted dual solution yields a valid upper bound and can be directly integrated into the branch and bound.

    \item The architecture is trained in a self-supervised fashion, directly on Max-Cut instances, without requiring solved SDP relaxations for supervision.

    \item We integrate the learned predictor into an exact branch-and-bound framework for Max-Cut and show speed-ups up to $10.6 \times$ over the same framework using the state-of-the-art interior-point method solver Mosek~\citep{mosek}. To the best of our knowledge, this is the first branch-and-bound implementation that uses a neural network solely to evaluate the problem relaxation.
\end{enumerate}

\emph{Our work establishes that Max-Cut SDP bounds can be learned efficiently, predicted feasibly, and integrated into exact branch and bound to accelerate solving while retaining correctness guarantees, suggesting a broader blueprint for learned surrogates across a large class of structured convex optimization problems. Notably, this is the first result of this type in the literature for the exact solution of a (hard) integer programming problem, setting the stage for a remarkably deeper integration between learning and discrete optimization.}

\subsection{Related work}

\paragraph{GNNs} MPNNs~\citep{Gil+2017,Sca+2009} have been extensively studied. Spatial MPNNs~\citep{bresson2017residual,Duv+2015,Ham+2017,Vel+2018,xu2018how} follow the message-passing paradigm introduced by \citet{Gil+2017}. However, standard MPNNs are fundamentally limited by the expressivity of the \new{$1$-dimensional Weisfeiler--Leman ($\WLk{1}$) test}~\citep{xu2018how, Mor+2019}, motivating various more expressive architectures, most notably \new{higher-order GNNs}~\citep{Mor+2019, Morris2020b, Mar+2019, Mor+2022b}.

\paragraph{GNN for (combinatorial) optimization} In combinatorial optimization, GNNs have demonstrated significant potential. 
For instance, in the context of \new{linear programs} (LPs) and \new{mixed-integer linear programs} (MILPs), GNNs serve a critical role by encoding the underlying problem structure into a constraint-variable bipartite graph~\citep{Gasse2019,ding2020accelerating,chen2022representing,khalil2022mip, qian2024exploring}. Theoretically, a growing body of literature has analyzed the expressivity required for GNNs to approximate the solutions of LPs~\citep{chen2022representing, chen2023mip, qian2024exploring,li2024pdhg,li2024onsmallgnn,li2025towards}, quadratic programs (QPs)~\citep{chen2024qp, wu2024representingqcqp, qian2025principled}, second-order cone programs (SOCPs)~\citep{li2025on}, and, most recently, linear semidefinite programs \citep{qian2026gnnsdp}. Closely related is \citet{OptGNN}, which aligns GNNs with low-rank SDP relaxations to generate rounded heuristic solutions for Max-CSP. In contrast, we explicitly approximate the continuous SDP relaxation to accelerate exact Max-Cut branch-and-bound algorithms.

\paragraph{Neural branching} Branch and bound~\citep{Ach+2005,achterberg2007constraint} is the foundational algorithm for exactly solving MILPs. To accelerate it, early machine learning methods learned surrogate scores~\citep{alvarez2017machine} or ranking functions~\citep{khalil2016learning} for branching. 
More recently, \citet{Gasse2019} trained an MPNN to directly imitate solver-branching behavior, replacing the costly strong-branching score computation. Subsequent works by \citet{zarpellon2021parameterizing,lin2022learning} have extended this by incorporating richer topological information from the search tree. On the theoretical perspective, \citet{chen2024rethinking} and \citet{zhou2025branching} investigated the expressivity required for effective variable selection, concluding that $\WLk{1}$-equivalent architectures are inherently inadequate. 
For a comprehensive survey, please refer to ~\citep{scavuzzo2024machine,chen2024learning}. Despite these advances, existing methods only predict which variable to branch on while still relying on classical LP solvers, and a branch-and-bound algorithm fully based on neural networks has yet to be developed.

\paragraph{Learning for constrained optimization} Another closely related line of work focuses on learning near-optimal solutions directly. Various methods have been proposed to ensure feasible or close-to-feasible predictions for constrained optimization problems. 
A naive approach is to incorporate penalty functions for constraint violations \citep{fioretto2020predicting,pan2022deepopf,qian2024exploring,tang2024learning}. 
More advanced methods include gradient descent correction \citep{Donti2021}, gauge function \citep{Li2023,Tordesillas2023}, homomorphic projection \citep{Liang2023,Liang2024}, decision rule \citep{Gonzalo2026}, differentiable projection layer \citep{Amos2017,Agrawal2019,Chen2021}, and search in feasible region \citep{qian2025towards,rashwan2025enforcing}. 
In parallel with these structural enforcements, several works have investigated primal-dual Lagrangian methods to ensure feasibility \citep{fioretto2020lagrangian,park2023self,tanneau2024dual,kotary2024learning}.  In the context of Max-Cut SDP relaxations, \citet{OptGNN} successfully enforces primal feasibility via a low-rank factorization combined with normalization. Drawing inspiration from the architectural guarantees of \citet{OptGNN} and the dual perspectives of \citet{tanneau2024dual}, our work develops a feasible, self-supervised learning framework tailored specifically for Max-Cut SDPs.

\paragraph{Global solvers for Max-Cut}
Over the years, many SDP-based branch-and-bound methods have been developed to solve Max-Cut problems to global optimality. These methods strengthen the basic SDP relaxation in \cite{goemans1995improved} by incorporating cutting planes (cuts), known as odd-cycle inequalities \citep{Helmberg1998}. Solvers such as BiqMac \citep{biqmac}, BiqBin \citep{biqbin}, BiqCrunch \cite{biqcrunch}, and MADAM \citep{madam} follow this approach. However, the number of triangle inequalities, which are the lowest-order odd-cycle inequalities, scales as $\mathcal{O}(n^3)$, making it impractical to include all of them explicitly. Consequently, these methods rely on first-order algorithms, such as bundle methods, combined with heuristics that selectively add a small subset of cuts. In contrast, our proposed method does not incorporate such cuts and is therefore \emph{not directly comparable} to these solvers in their most advanced configurations. To provide a fair assessment, we benchmark against these solvers both with and without cutting planes enabled in \cref{sec:ab_additional_bab}.

\begin{figure}
    \centering
    \includegraphics[width=\textwidth]{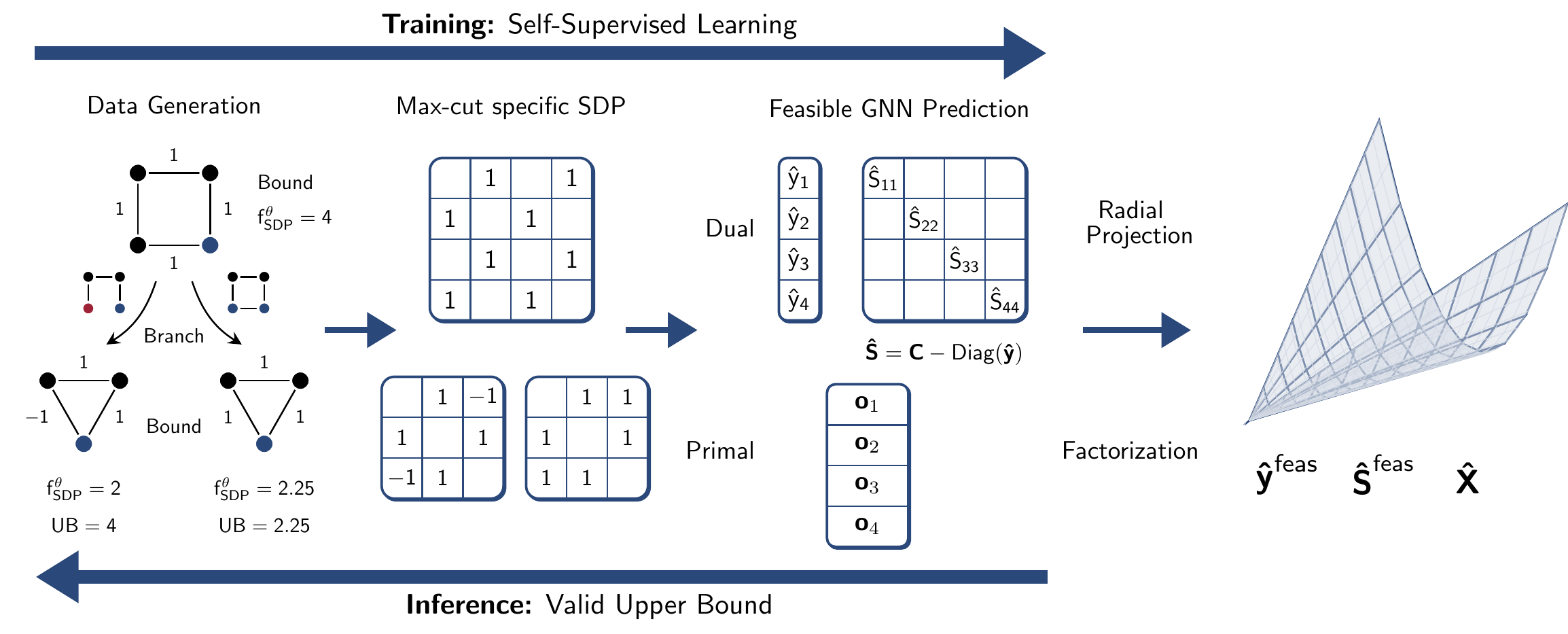} 
    \caption{\textbf{Overview of our neural SDP surrogate for Max-Cut.} \textbf{(1) Data Generation:} Subgraphs are generated dynamically from \emph{random} branch-and-bound search trees. \textbf{(2) GNN Prediction:} $\mcmpnn$ predicts unconstrained primal and dual features. \textbf{(3) Feasibility Guarantee:} A radial projection shifts the $\vec{\hat{y}}$ and $\vec{\hat{S}}$ to the feasible region, while the primal matrix is factorized by construction. \emph{The resulting valid bounds are used to aggressively prune the search tree during exact inference.}}
    \label{fig:overview}
\end{figure}

\subsection{Background}
\label{sec:background}

\paragraph{The Max-Cut problem}
Let $G \coloneqq (V, E)$ be an undirected graph with nodes $V$ ($|V| \coloneqq n$) and edge weights $w_{ij} \in \mathbb{R}$ for $\{i,j\} \in E$. 
The Max-Cut problem seeks to partition the nodes into two disjoint sets to maximize the sum of the weights of the edges crossing the cut. By assigning a binary variable $x_i \in \{-1, 1\}$ to each node to indicate its partition, the problem can be formulated as the quadratic integer program
\begin{equation}
\label{eq:maxcut}
\max_{\vec{x} \in \{-1, 1\}^n} \quad \frac{1}{2} \sum_{\{i,j\} \in E} w_{ij} (1 - x_i x_j) = \frac{1}{4} \vec{x}\trans \vec{L} \vec{x},
\end{equation}
where $\vec{L}$ is the graph Laplacian. Alternatively, using the adjacency matrix as the coefficient matrix $\vec{C}$, this is equivalent to minimizing $\vec{x}\trans \vec{C}\vec{x}$.

\paragraph{Semidefinite relaxation}
Because the integer formulation is \textsf{NP}-complete, a standard practice is to utilize its (SDP) relaxation~\citep{goemans1995improved}. By relaxing the scalar $x_i \in \{-1, 1\}$ to a unit vector $\vec{v}_i \in \mathbb{R}^n$ such that $\lVert \vec{v}_i \rVert = 1$, we define a positive semidefinite (PSD) matrix $\vec{X} \in \mathbb{S}^n_+$ where $X_{ij} = \vec{v}_i\trans \vec{v}_j$. The primal and dual formulations of the Max-Cut SDP relaxation are
\begin{subequations}\label{eq:maxcutsdp}
\begin{minipage}[t]{0.48\textwidth}
\begin{align}
\max_{\vec{X} \in \mathbb{S}^n_+} \quad & \ip{\vec{L}}{\vec{X}} 
\label{eq:maxcutsdpprimal}\\
\text{s.t.} \quad & \mathrm{diag}(\vec{X})=\mathbf{e},
\label{eq:maxcutsdpprimal_diag}
\end{align}
\end{minipage}
\hfill
\begin{minipage}[t]{0.48\textwidth}
\begin{align}
\min_{\vec{y} \in \mathbb{R}^n, \vec{S} \in \mathbb{S}^n_{+}} \quad
& \ip{\vec{e}}{\vec{y}}
\label{eq:maxcutsdpdual}\\
\text{s.t.} \quad & \vec{L} - \mathrm{Diag}(\vec{y}) + \vec{S} = 0.
\label{eq:maxcutsdpdual_eq}
\end{align}
\end{minipage}
\end{subequations}

To analyze this theoretically, we can view Equation~\eqref{eq:maxcutsdpprimal_diag} as a specific instance of the standard linear SDP form, $\min_{\vec{X} \in \mathbb{S}^n_+} \ip{\vec{C}}{\vec{X}} \; \text{ s.t. } \; \Acal(\vec{X}) = \vec{b}$. 
Here, $\vec{C}$ is the objective matrix, and the linear operator $\Acal \colon \mathbb{S}^n \to \mathbb{R}^m$ maps the matrix to $m$ constraints via $\Acal(\vec{X})=\big(\ip{\vec{A}_1}{\vec{X}},\ldots,\ip{\vec{A}_m}{\vec{X}}\big)\trans$. For Max-Cut, the constraints are strictly uniform and diagonal (i.e., $X_{ii} = 1$), meaning $m=n$ and the constraint matrices $\vec{A}_k$ are simply single-entry indicator matrices.

\paragraph{Expressive GNN for Max-Cut SDP}
The $\WLk{1}$ test~\citep{Wei+1968, Cai+1992} iteratively refines a node's color based on the multiset of its neighbors' colors via an injective function, i.e., $\vec{c}^{t}_v \coloneqq \mathsf{hash} ( \vec{c}^{t-1}_v, \dgal{ \vec{c}^{t-1}_u \mid u \in N(v) } )$, where $\dgal{\ldots}$ denotes a multiset. MPNNs are recognized as the neural instantiation of the $\WLk{1}$ test \citep{xu2018how,Mor+2019}. In each layer $t > 0$, an MPNN updates a node's feature $\hb_{v}^t$ by aggregating messages from its local neighborhood $N(v)$ as 
\begin{equation*}\label{def:MPNN_aggregation}
    \vec{m}_v^t = \AGG^{t} \mleft(\dgal[\Big]{\left(\hb_v^{t-1},\hb_{u}^{t-1}, w_{vu}\right) \mid u\in N(v)} \mright), \quad
    \hb_{v}^{t} = \UPD^{t}\mleft(\hb_{v}^{t-1}, \vec{m}_v^t \mright),
\end{equation*}
where $\AGG^{t}$ and $\UPD^{t}$ are parameterized functions, e.g., neural networks. For higher expressivity, the $\WLk{1}$ extends to $\WLk{k}$ and $\FWLk{k}$; see \cref{sec:more_wl} for more detailed background of $\mathsf{WL}$ hierarchy. 

As shown in \citet{qian2026gnnsdp}, representing SDPs necessitates higher-order expressivity equivalent to the $\FWLk{2}$ test; thereby, the $\vctfwl$ framework has been established. It initializes variable embeddings $\vec{v}^0_{ij}$ from the objective $\vec{C}$ and constraint embeddings $\vec{c}^0_k$ from the bounds $\vec{b}$ as
\begin{equation}
\label{def:initv}
\vec{v}^0_{ij} \coloneq \mathsf{init}_{\text{v}}\mleft(C_{ij}, \mathbb{I}_{i=j} \mright), \quad \vec{c}^0_{k} \coloneq \mathsf{init}_{\text{c}}\mleft(b_{k}\mright)
\end{equation}
and applies the following joint update:
\begin{align}
\begin{split}
\label{def:vc2fwl}
\vec{v}_{ij}^t &\coloneq \mathsf{hash} \Bigl( \vec{v}_{ij}^{t-1}, \dgal[\big]{ \dgal{\vec{v}_{uj}^{t-1}, \vec{v}_{iu}^{t-1}} \mid u \in [n]}, \dgal[\big]{(A_{k, ij}, \vec{c}^{t-1}_k) \mid k \in N(ij)} \Bigr) \nonumber \\
\vec{c}^t_{k} &\coloneq \mathsf{hash} \Bigl( \vec{c}^{t-1}_{k}, \dgal[\big]{\left(A_{k,ij}, \vec{v}_{ij}^{t-1} \right) \mid (i,j) \in N(k)} \Bigr).
\end{split}
\end{align}
The update step incurs heavy computational complexity of $\mathcal{O}(n^3 + \text{nnz}(\mathcal{A}))$ per update step for general linear SDP, motivating the derivation of a sparsity-aware variant; see~\cref{sec:gnnmc}.
\paragraph{Branch and Bound} 
To solve the Max-Cut in \eqref{eq:maxcut} to global optimality, the \new{branch-and-bound} (B\&B) method is typically used~\citep{biqmac}. The branch step recursively decomposes a graph into two smaller ones, where one of the vertices is assigned to one partition or the other. As shown in~\cref{fig:overview}, the vertices can be contracted to produce smaller equivalent graphs with updated edge weights $w_{ij}$. Most standard Max-Cut solvers, such as BiqMac \citep{biqmac}, follow this contraction-based approach to recursively reduce the graph size by one. The bound step solves the SDP relaxation of the resulting subproblems to provide an upper bound $f^*_{\mathrm{SDP}}$, while the best feasible cut $\vec{x} \in \{-1, 1\}^n$ gives a lower bound $f_{\mathrm{LB}}$. The Goemans--Williamson algorithm~\citep{goemans1995improved}, which is based on the primal SDP solution, is typically used to obtain good feasible Max-Cut solutions. Hence, the B\&B method progressively closes the gap, and a B\&B node can be safely pruned whenever $f^*_{\mathrm{SDP}} < f_{\mathrm{LB}} + 1$.

\section{A principled GNN architecture to learn feasible bounds}
In this work, we demonstrate that by exploiting the uniform diagonal structure of the Max-Cut relaxation, we can safely bypass this general formulation. We propose a highly optimized, sparsity-aware architecture that preserves the necessary higher expressivity while dramatically reducing the computational bottleneck for branch-and-bound integration.

\subsection{Max-Cut-specific GNN}\label{sec:gnnmc}

Here, we propose a Max-Cut-specific algorithm and its corresponding neural architecture. To that, we first observe that for the Max-Cut problem, the constraints uniformly depend on diagonals. This coincides with the diagonal indicator at the variable node initialization in \cref{def:initv}, meaning the constraints are uniquely and sufficiently marked. Consequently, we can safely drop the corresponding constraint nodes. By removing redundant constraint nodes, this simplification yields our first specific architecture, Max-Cut Weisfeiler--Leman ($\mcwl$). The algorithm initializes variables following \cref{def:initv}, with the color update step defined following, where $\oplus$ denotes any choice of injective and order-invariant aggregation function, e.g., multiset hashing or element-wise summation in neural networks:
\begin{equation*}
\label{eq:mcwl}
\mcwl \colon \quad \vec{v}_{ij}^t \coloneq \mathsf{hash} \Bigl( \vec{v}_{ij}^{t-1}, \dgal[\big]{ \vec{v}_{uj}^{t-1} \oplus \vec{v}_{iu}^{t-1} \mid u \in [n]} \Bigr).
\end{equation*}

In addition, we notice that the objective matrix $\vec{C}$ is often sparse. In fact, based on the proof of the expressivity of the $\vctfwl$, the variable update can be reformulated to be sparsity-aware by incorporating $\vec{C}$ directly into the message passing, resulting in a variant: 
\begin{align}
\dmcwl \colon \quad \vec{v}_{ij}^t \coloneq \mathsf{hash} \Bigl( \vec{v}_{ij}^{t-1}, &\dgal[\big]{ (\vec{v}_{iu}^{t-1}, C_{uj}) \mid u \in [n], C_{uj} \neq 0} \nonumber \\
\oplus & \dgal[\big]{ (\vec{v}_{uj}^{t-1}, C_{iu}) \mid u \in [n], C_{iu} \neq 0} \Bigr). \label{def:dmcwl}
\end{align}

The connection of the multisets via the order-invariant $\oplus$ is to guarantee symmetry $\vec{v}_{ij}^t = \vec{v}_{ji}^t$. 



To determine the running-time complexity of $\dmcwl$, we analyze its aggregation steps. 
The initialization and update is on every index $(i,j) \in [n]^2$, taking $\mathcal{O}(n^2)$ time. 
Furthermore, the sparse attribute aggregations filter exclusively for the non-zero entries of $\vec{C}$. For a given column $j$, let $d_j$ be its number of non-zero entries. Computing the multiset $\dgal[\big]{ (\vec{v}_{iu}^{t-1}, C_{uj}) \mid u \in [n], C_{uj} \neq 0}$ for all $n$ rows requires $\mathcal{O}(n \cdot d_j)$ operations. Summing this cost across all columns yields $\mathcal{O}(n \sum d_j) = \mathcal{O}(ne)$, where $e$ is the total number of non-zero entries.
This concludes the overall running-time complexity per update step, which is $\mathcal{O}(n^2 + ne)$.

The following results show that $\dmcwl$ is expressive enough to represent the optimal solution of a Max-Cut instance. 

\begin{theorem}
\label{thm:sparsewl}
Let $\vec{X}^* \in \mathbb{S}^n_+$ and $\vec{y}^* \in \mathbb{R}^n$ denote the optimal primal and dual solutions for a given Max-Cut SDP relaxation. For any indices $(i, j)$ and $(p, q)$, if the stable colors produced by $\dmcwl$ satisfy $\vec{v}^{\infty}_{ij} = \vec{v}^{\infty}_{pq}$, then the corresponding primal entries satisfy $X^*_{ij} = X^*_{pq}$. Furthermore, if the diagonal stable colors satisfy $\vec{v}^{\infty}_{kk} = \vec{v}^{\infty}_{ll}$, then the dual solution satisfies $y^*_k = y^*_l$.
\end{theorem}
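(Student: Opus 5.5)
The plan is to show that the stable coloring of $\dmcwl$ induces a partition of $[n]^2$ that is invariant under the operations preserving the optimal solution set of the SDP. Then we pick a canonical optimal solution that is constant on color classes. The optimum need not be unique, so I read the statement as referring to a distinguished optimal pair, for instance the analytic center of the optimal face. Equivalently, one takes $\vec{X}^*$ obtained by averaging, since the feasible set and objective are convex.

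\textbf{Step 1: the color-class subspace.} Let $\mathcal{P}$ be the partition of $[n]^2$ into stable color classes. Let $\mathcal{W}$ be the span of the indicator matrices $\vec{E}_P$ for $P\in\mathcal{P}$. I will show that $\mathcal{W}$ is a coherent-configuration-like algebra with respect to the operations that matter:
\begin{itemize}
\item it contains $\vec{I}$, because the initialization marks $\mathbb{I}_{i=j}$;
\item it contains $\vec{C}$ and hence $\vec{L}$, because $C_{ij}$ is part of the initial color and the degree $\sum_u C_{iu}$ is determined by the diagonal color via the sparse multiset;
\item it is closed under transpose, by the symmetric $\oplus$;
\item stability gives closure under left and right multiplication by $\vec{C}$: $(\vec{C}\vec{E}_P)_{ij}=\sum_u C_{iu}\mathbb{I}[(u,j)\in P]$ is read off from the multiset $\dgal{(\vec{v}_{uj},C_{iu})}$, and symmetrically for $\vec{E}_P\vec{C}$.
\end{itemize}
Hence $\mathcal{W}$ is a matrix space containing $\vec{I}$, closed under transpose and under multiplication by $\vec{L}$ on both sides. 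One caveat: the sparse update alone does not obviously give closure under multiplication of arbitrary class matrices. This is the step I expect to be the main obstacle, and it is handled in Step 2.

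\textbf{Step 2: a projection that preserves optimality.} Let $\Proj$ be the orthogonal projection onto $\mathcal{W}$, i.e.\ averaging entries over each color class. It preserves $\mathrm{diag}(\vec{X})=\mathbf{e}$, because diagonal classes are separated from off-diagonal ones and $\Proj$ averages diagonal entries equal to $1$. It preserves the objective, since $\ip{\vec{L}}{\Proj\vec{X}}=\ip{\Proj\vec{L}}{\vec{X}}=\ip{\vec{L}}{\vec{X}}$. The issue is preserving positive semidefiniteness.

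The route of \citet{qian2026gnnsdp} uses the $\FWLk{2}$ closure under products, which yields a matrix algebra $\mathcal{W}$. In that case $\Proj$ equals the average over conjugations by a group, or more generally a conditional expectation onto a $*$-algebra, and is therefore positive. Here I will instead argue directly with optimality conditions. Start from any optimal pair and consider the $\dmcwl$ refinement of the optimality system. Complementary slackness gives $\vec{X}^*\vec{S}^*=0$ with $\vec{S}^*=\mathrm{Diag}(\vec{y}^*)-\vec{L}$. Using only multiplications by $\vec{L}$ and diagonal matrices in $\mathcal{W}$, I aim to show that a canonical solution can be constructed inside $\mathcal{W}$, namely the limit of the central path.

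Concretely, the central path solution $\vec{X}(\mu)$ solves $\vec{X}(\mathrm{Diag}(\vec{y})-\vec{L})=\mu\vec{I}$ with $\mathrm{diag}(\vec{X})=\mathbf{e}$. The argument then proceeds in three parts:
\begin{enumerate}
\item If $\vec{y}$ is constant on diagonal classes, then $\vec{S}=\mathrm{Diag}(\vec{y})-\vec{L}$ lies in the unital algebra generated by $\vec{L}$ and the diagonal class indicators.
\item Its inverse $\mu\vec{S}^{-1}$ is a polynomial in $\vec{S}$ by Cayley--Hamilton, so it lies in the same algebra.
\item I will prove that this algebra is contained in the span of the stable classes. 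The argument is an induction on word length: a product $\vec{L}\vec{M}$ with $\vec{M}$ class-constant is class-constant by the sparse-multiset stability.
\end{enumerate}

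\textbf{Step 3: conclusion.} By uniqueness of the central path point and invariance of the barrier problem under the symmetrization, $\vec{y}(\mu)$ is constant on diagonal color classes and $\vec{X}(\mu)\in\mathcal{W}$. Passing to the limit $\mu\to0$ gives the analytic-center optimal pair $(\vec{X}^*,\vec{y}^*)$. This pair satisfies $X^*_{ij}=X^*_{pq}$ whenever the colors agree, and $y^*_k=y^*_l$ when the diagonal colors agree.

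The fixed-point step, that the barrier solution $\vec{y}(\mu)$ is itself constant on color classes, I would prove as follows. The barrier dual objective is strictly convex. Its gradient map sends class-constant $\vec{y}$ to class-constant vectors, namely the diagonals of the class-constant matrix $\vec{S}^{-1}$. Therefore the restriction of the problem to class-constant $\vec{y}$ has a stationary point that is a global stationary point, and by uniqueness it is the minimizer.
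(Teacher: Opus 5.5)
\textbf{There is a genuine gap: the closure step fails as stated.} Your argument rests on two closure claims: the last bullet of Step~1, and Step~2(3). Both are false as written.

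Because $\oplus$ is order-invariant, the stable color of $(i,j)$ determines only the \emph{unordered} pair formed by the multiset of $(\vec{v}_{iu},C_{uj})$ and the multiset of $(\vec{v}_{uj},C_{iu})$. For $\vec{M}\in\mathcal{W}$ this gives only the unordered pair $\{(\vec{M}\vec{C})_{ij},(\vec{C}\vec{M})_{ij}\}$, not each entry separately. In fact, since $\vec{v}^\infty_{ij}=\vec{v}^\infty_{ji}$, every element of $\mathcal{W}$ is symmetric. Now let $\vec{D}_Q\in\mathcal{W}$ be the indicator of a diagonal color class. Then $\vec{L}\vec{D}_Q$ is not symmetric as soon as some edge joins two different diagonal classes, which already happens on the path with three vertices. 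Consequently:
\begin{itemize}
\item $\mathcal{W}$ is not closed under one-sided multiplication by $\vec{C}$ or $\vec{L}$;
\item the unital algebra generated by $\vec{L}$ and the $\vec{D}_Q$ is not contained in $\mathcal{W}$, so your word-length induction fails;
\item that induction also only ever multiplies by $\vec{L}$ and never treats the diagonal generators $\vec{D}_Q$, which is where the real difficulty lies.
\end{itemize}

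\textbf{What survives, and what is missing.} The Jordan closure $\vec{M}\mapsto\vec{C}\vec{M}+\vec{M}\vec{C}$ does hold, because the sum of the unordered pair is determined. Since $\vec{S}^{k+1}=\tfrac12(\vec{S}\vec{S}^k+\vec{S}^k\vec{S})$, this would give $\vec{S}^{-1}\in\mathcal{W}$ provided you also prove one more fact. You need that $\vec{M}\mapsto\mathrm{Diag}(\vec{y})\vec{M}+\vec{M}\,\mathrm{Diag}(\vec{y})$ preserves $\mathcal{W}$ for class-constant $\vec{y}$. Equivalently, $\vec{v}^\infty_{ij}$ must determine $y_i+y_j$ on every pair where $\vec{S}^k$ can be non-zero, for instance by determining the unordered pair $\{\vec{v}^\infty_{ii},\vec{v}^\infty_{jj}\}$. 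Under $\FWLk{2}$ this is immediate from the global aggregation, via the terms $u=i$ and $u=j$. Under the local update of $\dmcwl$, however, the diagonal is reached only along paths, and the unordered merge forgets which endpoint was moved. So this is a genuinely missing lemma, not something the sparse update gives for free.

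Given that lemma, your barrier fixed-point argument would go through and yield an optimal pair in $\mathcal{W}$. Two small caveats about that pair: the central-path limit need not be the analytic center without strict complementarity, and it is a different canonical solution from the paper's minimum-Frobenius-norm one.

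\textbf{Comparison with the paper.} The paper simulates PDHG on the Frobenius-regularized SDP. It proves that the $\dmcwl$ coloring refines every power of $\vec{C}$, hence its spectral projectors. It then invokes this lemma for the PSD projection of $\vec{Z}^t=(\vec{X}^t-\alpha_t\mathrm{Diag}(\vec{y}^t)-\alpha_t\vec{C})/(1+\alpha_t\varepsilon)$. That lemma is proved only for polynomials in the matrix that drives the $\dmcwl$ messages. So this step tacitly assumes the same kind of product closure you flagged as the main obstacle, and in a stronger form, since $\vec{Z}^t$ contains the dense iterate $\vec{X}^t$. Your central-path formulation is better targeted, because it needs spectral calculus of the single matrix $\mathrm{Diag}(\vec{y})-\vec{L}$ only. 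But you cannot borrow the missing closure from the paper's argument.
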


Building on \citet{qian2026gnnsdp}, we can neuralize the algorithms defined above into specific neural network architectures. The variable features are all initialized as $\vec{h}_{ij}^0 \coloneqq \mathsf{INIT}(C_{ij}, \mathbb{I}_{i=j}) \in \mathbb{R}^d$, where $\mathsf{INIT}$ is a parameterized function.

First, translating $\mcwl$ into the $\mcmpnn$ architecture, we encode the pairwise $\oplus$ operation with a $\mathsf{MAP}$ function and the outer multiset aggregation with a neural network $\mathsf{MSG}$, using sum aggregation for both operations:
\begin{equation*}
\label{eq:mcmpnn}
\mcmpnn \quad \vec{h}_{ij}^{t} \coloneq \mathsf{UPD}^{t} \Biggl( \vec{h}_{ij}^{t-1}, \sum_{u \in [n]} \mathsf{MSG}^{t} \Bigl( \mathsf{MAP}^{t}\left(\vec{h}_{uj}^{t-1}\right) + \mathsf{MAP}^{t}\left(\vec{h}_{iu}^{t-1}\right) \Bigr) \Biggr).
\end{equation*}

To make the architecture objective-aware, we neuralize $\dmcwl$ by embedding the matrix entries $C$ directly into the message passing. Merging both multisets after the $\mathsf{MSG}$ function results in the $\dmpnn$ update:
\begin{equation*}
\label{eq:dmpnn}
\dmpnn \quad \vec{h}_{ij}^{t} \coloneq \mathsf{UPD}^{t} \Biggl( \vec{h}_{ij}^{t-1}, \sum_{u \colon C_{iu} \neq 0} \mathsf{MSG}^{t} \bigl( \vec{h}_{uj}^{t-1}, C_{iu} \bigr) + \sum_{u \colon C_{uj} \neq 0} \mathsf{MSG}^{t} \bigl( \vec{h}_{iu}^{t-1}, C_{uj} \bigr) \Biggr).
\end{equation*}


In \cref{sec:proof_neural} we prove that the above architectures can be expressive enough to approximate the optimal solution of a Max-Cut SDP. 

\subsection{Enforce feasibility}
\label{sec:ssl}
Let $\vec{h}_{ij}^T$ denote the final feature tensor produced after $T$ GNN layers. Because the variables operate on the matrix space, we first extract node-level representations $\vec{h}_i$ by aggregating the symmetric pairwise embeddings across the rows (or columns). From these node features, we employ two independent multi-layer perceptron (MLP) heads to formulate the primal and dual predictions.

For the primal variable, an MLP projects $\vec{h}_i$ into a lower-dimensional embedding, which is subsequently $\ell_2$-normalized to yield a unit vector $\vec{o}_i \in \mathbb{R}^r$ such that $\lVert \vec{o}_i \rVert_2 = 1$.
The primal matrix is then constructed via the inner product $\hat{X}_{ij} \coloneq \vec{o}_i\trans \vec{o}_j$, a formulation that structurally guarantees positive semidefiniteness. This prediction is trained by directly optimizing \cref{eq:maxcutsdpprimal}.

For the dual problem, a separate MLP maps $\vec{h}_i$ to an unconstrained scalar prediction $\hat{y}_i \in \mathbb{R}$. However, for the Max-Cut SDP relaxation \eqref{eq:maxcutsdp}, dual feasibility requires the slack matrix to be positive semidefinite: $\vec{S} \in \mathbb{S}^n_{+}$. The raw network outputs $\hat{\vec{y}}$ typically violate this condition. To enforce dual feasibility, we evaluate the predicted slack matrix using the equality constraint \eqref{eq:maxcutsdpdual_eq} as $\hat{\vec{S}} \coloneq \mathrm{Diag}(\hat{\vec{y}}) - \vec{L}$, and apply a radial projection. By uniformly shifting the dual variables by the magnitude of the most negative eigenvalue of $\hat{\vec{S}}$ \footnote{In practice, the spectral decomposition needs to be done only once per instance and is not the computational bottleneck.}, we guarantee a valid, feasible dual solution,
\begin{equation*}
\label{eq:radial_projection}
\hat{\vec{y}}^{\mathrm{feas}} \coloneq \hat{\vec{y}} + \max\{0,\,-\lambda_{\min}(\hat{\vec{S}})\} \mathbf{e},
\end{equation*}
based on which we can directly optimize with \cref{eq:maxcutsdpdual} during training.

Crucially, directly optimizing these objectives aligns seamlessly with our theoretical analysis. To guarantee uniqueness for symmetry analysis, \cref{thm:sparsewl} isolates the minimum Frobenius norm solution, ensuring that nodes indistinguishable by $\dmcwl$ share identical values. In contrast, our self-supervised learning (SSL) framework optimizes the objective directly without this regularization, allowing for a more flexible optimization search space. Nevertheless, because the predicted solution is parameterized by the structurally refined colorings of $\dmcwl$, it inherently preserves the necessary matrix symmetries within this broader space.

\subsection{Branch and bound baseline and GNN-enhanced variants}
\label{sec:BB}
We consider three B\&B solvers for Max-Cut: (i) a vanilla, SDP solver-based B\&B solver; (ii) a hybrid B\&B solver; and (iii) a neural B\&B solver. For completeness, the~\cref{sec:bab_algo} provides additional background on B\&B with the pseudo code of the three solvers shown in \cref{alg:bab}. 

In the vanilla solver (i), each node is evaluated by solving the SDP relaxation with Mosek to obtain a valid upper bound. The resulting primal SDP solution is then rounded by the Goemans--Williamson algorithm to obtain a feasible cut, which can be used to potentially update the incumbent lower bound. Nodes are selected using a best-bound-first strategy, and branching is performed on the most fractional variable. This solver serves as our standard SDP-based baseline. 

The first GNN-enhanced variant is the hybrid solver (ii) that uses the GNN as an early-fathoming oracle before invoking Mosek. Because the proposed GNN predicts a dual-feasible solution by construction, its output always provides a valid upper bound. If this bound already satisfies the pruning condition, the node is fathomed immediately, and the Mosek evaluation is skipped. Otherwise, Mosek is invoked, and the algorithm proceeds exactly as in the vanilla solver.

The second GNN-enhanced variant is the fully neural solver (iii), in which both primal and dual solutions are produced by the GNN. The predicted dual-feasible solution provides a valid upper bound, while the predicted primal-feasible solution is passed to the Goemans--Williamson algorithm to obtain feasible cuts. Thus, the SDP relaxation is never solved by Mosek. Moreover, because GNN inference is lightweight, many child nodes can be collected, batched as graphs, and evaluated in parallel, enabling substantially faster node enumeration while preserving the validity of the bounds.

\section{Experimental study}
\label{sec:results}

The ultimate goal of our architecture is to accelerate exact combinatorial optimization. In this section, we rigorously assess the practical utility of our proposed framework by answering the following key questions:

\begin{enumerate}[label=\textbf{Q\arabic*}, leftmargin=*]
    \item Do the Max-Cut specific GNNs, especially sparsity-aware formulations, yield measurable, practical improvements in computational efficiency?
    \item How does our neural SDP predictor compare against state-of-the-art classical solvers when integrated into a Max-Cut B\&B pipeline?
\end{enumerate}

\subsection{Experiment design}

\paragraph{Comparison of GNN variants on uniform dataset}
To address \textbf{Q1}, we directly compare the predictive performance of $\mcmpnn$ and $\dmpnn$ on synthetic graphs. We generate three distinct datasets of Erd\H{o}s--R\'enyi (ER) graphs \citep{erdds1959random} -- the most common graphs for Max-Cut benchmarking -- varying the number of nodes $N \in \{50, 100\}$ and the edge probabilities $p \in \{0.1, 0.5\}$. Each dataset contains $\num{10000}$ instances and is partitioned into training, validation, and test sets at an $8{:}1{:}1$ ratio. Because our framework utilizes SSL loss, the training phase is entirely unsupervised and requires no ground-truth labels. 
While our training dynamics largely follow \citet{qian2026gnnsdp}, we adapt the framework for SSL by directly optimizing the dual objective.


Finally, to evaluate the predictive accuracy of our models, we compute the exact labels for the test set using the Mosek solver \citep{mosek}. We compare the models using the averaged relative objective gap over the test set:
$\left|\left(\text{obj} - \text{obj}^*\right) / \text{obj}^* \right| \cdot 100\%$, where $\text{obj}$ is the predicted value and $\text{obj}^*$ is the ground-truth.

\paragraph{B\&B experiments on Max-Cut dataset}
For \textbf{Q2}, we evaluate the three variants of B\&B described in Section \ref{sec:BB}: (i) a vanilla SDP-based B\&B solver using Mosek as baseline; (ii) a hybrid B\&B solver that uses GNN for early pruning and invokes Mosek if the node is not pruned; (iii) a neural B\&B solver in which the GNN produces lower bounds and upper bounds. To justify our choice of Mosek as the underlying SDP solver for these evaluations, we carried out ablation experiments comparing multiple SDP solvers in \cref{sec:alationSDPsolver}. For the neural solver, we test both sequential evaluation (no batch) and batched evaluation with batch sizes $K \in \{8, 16, 32\}$. In batched evaluation, the solver instead selects up to $K$ active nodes from the priority queue before performing inference. After branching on these nodes, the resulting $2K$ child-node subgraphs are collected into a mini-batch and evaluated together by the GNN. The predicted bounds are then assigned back to the corresponding child nodes, after which the usual pruning and queue-update steps are applied. Ablation studies of different batch sizes are shown in Sections \ref{sec:ab_heur_free} and \ref{sec:ab_additional_bab}. Experiments were conducted on AMD EPYC 7662 CPU and NVIDIA L40S GPU. 

In every epoch, we generate $\num{20}$ ER seed graphs on the fly from the same distribution of a target instance family. To mimic the subproblems encountered during the B\&B tree, for each seed graph, we sample $\num{20}$ \emph{random branching trajectories}  until 3 free variables and expand them into $20 \times \lvert N-2 \rvert$ subgraphs subject to successive node contractions. 
The B\&B solvers are tested on unseen benchmark instances from Max-Cut libraries reported in \citep{biqmac,biqbin}, including (1) \texttt{g05\_N} for $N \in \{60,80,100\}$ with edge probability $0.5$ and edge weights in $\{0, 1\}$, (2) \texttt{pm1s\_N} for $N \in \{80,100\}$ with edge probability $0.1$ and edge weights in $\{-1, 0, 1\}$; (3) \texttt{w01\_N} for $N \in \{100\}$ with edge probability $0.1$ with integer edge weights in $[-10, 10]$. The model trained within \texttt{g05\_100} distribution is tested on \texttt{g05\_60}, \texttt{g05\_80}, and \texttt{g05\_100}, while the remaining models are trained and tested on their respective distributions. 

\subsection{Results}

\paragraph{Q1: Comparison of GNN variants} \cref{tab:main} demonstrates the objective gaps and the computational overheads during training. Notably, $\dmpnn$ and $\mcmpnn$ exhibit comparable predictive accuracy and training time across all datasets, though $\dmpnn$ requires slightly less VRAM. 

Therefore, we frame our contributions along two parallel tracks: $\dmcwl$ establishes the theory that sub-cubic complexity is possible for Max-Cut SDPs, laying the foundation for future sparse-optimized hardware. Meanwhile, $\mcmpnn$ and $\dmpnn$ both serve as competitive realizations of our theory, emerging as the most practical and efficient choices for deployment.

\begin{table*}[htb!]
\caption{Training computational overhead, measured in seconds per epoch and VRAM usage allocated, alongside predictive performance on the test set, averaged over 3 random seeds. The best results are highlighted in bold.}
\label{tab:main}
\centering
\resizebox{0.8\textwidth}{!}{
\begin{tabular}{lcccc}
\toprule
\multirow{2}{*}{Target} & \multirow{2}{*}{Model} & \multicolumn{3}{c}{\textbf{Problems}} \\
& & N=50, p=0.1 & N=50, p=0.5 & N=100, p=0.1 \\
\midrule

\multirow{2}{*}{Obj gap (\%)} & $\mcmpnn$ 
& \textbf{0.126$\pm$\scriptsize0.005}    %
& 0.101$\pm$\scriptsize0.001    %
& \textbf{0.598$\pm$\scriptsize0.005}    \\ %
& $\dmpnn$  
& 0.128$\pm$\scriptsize0.005    %
& \textbf{0.099$\pm$\scriptsize0.001}    %
& 0.628$\pm$\scriptsize0.004    \\ 
\midrule

\multirow{2}{*}{Time sec/epoch} & $\mcmpnn$  
& 6.5    %
& \textbf{6.3}    %
& \textbf{27.9}    \\ %
& $\dmpnn$  
& \textbf{6.0}    %
& \textbf{6.3}    %
& 28.2    \\ %
\midrule

\multirow{2}{*}{VRAM (GB)} & $\mcmpnn$  
& 23.6    %
& 23.6    %
& 82.0   \\ %
& $\dmpnn$  
& \textbf{21.6}    %
& \textbf{21.6}    %
& \textbf{81.6}    \\
\bottomrule
\end{tabular}
}
\end{table*}

Additional ablation studies, including sensitivity analyses of network depth, out-of-distribution generalization, and normalization strategies ($\mathsf{LayerNorm}$, $\mathsf{GraphNorm}$ etc.), are provided in \cref{sec:add_exps}.

\paragraph{Q2: Branch-and-bound experiments}
We first conduct a controlled experiment to isolate the effect of the GNN-predicted upper bounds. In this experiment, we disable the Goemans--Williamson rounding step in all three B\&B solvers (i)--(iii) and initialize the global lower bound with the true optimal Max-Cut value at the root node. This setup removes the influence of primal heuristics and incumbent updates, so differences in performance can be attributed directly to the quality of the upper bounds used for fathoming.   The complete heuristic-free B\&B statistics for all three solvers are provided in Section~\ref{sec:ab_heur_free}. In \cref{tab:babhybrid}, we highlight that the hybrid solver (ii) predicts sufficiently tight upper bounds to identify more than $83\%$ of SDP-prunable nodes during B\&B.

\begin{table*}[htb!]
\caption{Average fraction of SDP-prunable nodes pruned by GNN in the B\&B without Goemans--Williamson algorithm using the hybrid solver on established instances. The best-performing configuration, $\mcmpnn$ + $\mathsf{LayerNorm}$, is used for the hybrid solver. }
\centering
\label{tab:babhybrid}
\resizebox{0.6\textwidth}{!}{
\begin{tabular}{lcc}
\toprule
Instance Family & Method & \% Nodes Pruned by GNN \\
\midrule
\multirow{1}{*}{\texttt{g05\_60}} 
& Hybrid
& 85.5\%    \\ %
\midrule

\multirow{1}{*}{\texttt{g05\_80}} 
& Hybrid
& 86.8\%    \\ %
\midrule

\multirow{1}{*}{\texttt{g05\_100}} 
& Hybrid
& 83.8\%    \\ %
\bottomrule
\end{tabular}
}
\end{table*}

Having isolated the effect of upper-bound quality, we next evaluate whether the learned bounds remain useful in a full B\&B pipeline where incumbent solutions are generated dynamically. For this experiment, we focus on variants (i) and (iii): the vanilla solver and the fully neural solver. In both cases, feasible Max-Cut solutions are obtained via the Goemans--Williamson algorithm, using primal SDP solutions from Mosek for the vanilla solver and from the GNN for the neural solver. The results are reported in \cref{tab:bab_main}, where the neural solver uses the best-performing configuration selected in the ablation studies in \cref{sec:ab_additional_bab}: $\mcmpnn$ with \textsf{LayerNorm} and batched inference of size 32.

Overall, the neural solver evaluates slightly more nodes than the vanilla SDP-based solver, reflecting the fact that its learned bounds are somewhat less tight than the exact SDP solver. 
However, this modest increase in tree size is more than offset by the much lower cost of each node evaluation. This per-node cost reduction is driven by two complementary effects. First, the GNN replaces repeated SDP solves with lightweight inference. Second, batching enables multiple GNNs to be evaluated together in a single mini-batch, which improves hardware utilization. 
The speed-ups are consistent across all tested instance families and become more pronounced on larger instances, where SDP solves are increasingly expensive. The additional ablation studies of the two complementary effects are provided in~\cref{sec:ab_heur_free,sec:ab_additional_bab}.

The strong performance of the model trained on \texttt{g05\_100}-in-distribution graphs when tested on \texttt{g05\_60} and \texttt{g05\_80} suggests that the learned SDP proxy effectively transfers across graph sizes within the same instance family. This generalization is nontrivial because, although the training data contain contracted subgraphs of comparable sizes, they are typically denser and no longer have purely binary edge weights. A more detailed analysis of generalization is provided in \cref{sec:ab_norm}.

\begin{table*}[htb!]
\caption{Branch-and-bound results on established instances. The \#node and time are averages over each instance family. The best-performing configuration, $\mcmpnn$ + $\mathsf{LayerNorm}$ with batched inference of size $32$, is reported for the neural solver.}
\centering
\label{tab:bab_main}
\resizebox{0.9\textwidth}{!}{
\begin{tabular}{lcccc}
\toprule
Instance Family & Method & \# Nodes Evaluated & Solving Time (s) & Speed-up ($\times$) \\
\midrule
\multirow{2}{*}{\texttt{g05\_60}} 
& Vanilla 
& 3566   %
& 53.1   & 1 \\ %
& Neural
& \textbf{4562}   %
& \textbf{9.5} & \textbf{5.6}   \\ %
\midrule

\multirow{2}{*}{\texttt{g05\_80}} 
& Vanilla 
& 49128    %
& 1170.3 & 1    \\ %
& Neural
&  \textbf{53975}  %
&  \textbf{155.7} & \textbf{7.5}  \\ %
\midrule

\multirow{2}{*}{\texttt{g05\_100}} 
& Vanilla 
& 1087290    %
& 40930.1 & 1    \\ %
& Neural
&  \textbf{1272801}  %
&   \textbf{4509.3} & \textbf{9.1}  \\ %
\midrule

\multirow{2}{*}{\texttt{pm1s\_80}} 
& Vanilla 
&  25656   %
&   609.5 & 1  \\ %
& Neural
&   \textbf{27809} %
&    \textbf{80.8} & \textbf{7.5} \\ %
\midrule

\multirow{2}{*}{\texttt{pm1s\_100}} 
& Vanilla 
& 254408    %
& 10382.5 & 1    \\ %
& Neural
&  \textbf{302314}  %
&  \textbf{1058.2} & \textbf{9.8}  \\ %
\midrule

\multirow{2}{*}{\texttt{w01\_100}} 
& Vanilla 
& 269407    %
& 11348.6 & 1    \\ %
& Neural
& \textbf{305720}   %
&  \textbf{1073.9} & \textbf{10.6}  \\ %
\bottomrule
\end{tabular}
}
\end{table*}

\section{Conclusion}

We introduced a feasibility-preserving GNN proxy for the semidefinite relaxations arising in exact Max-Cut branch and bound. By exploiting the sparse structure of the Max-Cut SDP, our architecture avoids the cubic bottleneck of more general GNNs for SDP while retaining the expressivity needed to represent optimal primal and dual solutions. The resulting model predicts primal-feasible and dual-feasible SDP solutions by construction, enabling valid upper bounds and self-supervised training without requiring solved SDP instances as labels. Empirically, our results show that learned SDP bounds can be sufficiently accurate to substantially reduce the cost of bounding in branch and bound. In particular, the proposed neural solver accelerates SDP-based exact Max-Cut solving by up to $10.6\times$ compared with the same framework using Mosek, while preserving correctness via dual feasibility. These findings suggest that learned solvers need not be limited to heuristic guidance, such as branching or cut selection, but can instead replace core relaxation computations when feasibility and validity are enforced through a tailored GNN architecture. 

\paragraph{Limitations and possible future directions} 
The proposed framework exhibits two primary limitations. First, its reliance on generating and training on in-distribution data restricts the model's ability to generalize to out-of-distribution graphs. To address this, we plan to explore foundation models pretrained across diverse graph classes. Second, although our neural solver effectively accelerates exact combinatorial optimization, it does not yet consistently surpass state-of-the-art classical exact solvers such as BiqCrunch \cite{biqcrunch}, which have benefited from decades of research on Max-Cut and derive much of their strength from advanced branch-and-cut paradigms. To bridge this performance gap, we aim to investigate tighter integration between neural SDP solvers and classical cutting-plane methods.


\begin{ack}
C.L. would like to acknowledge the financial support from the Office of Naval Research (ONR) award N000142412641. A.L. is grateful for the support of ONR, award N00014-24-1-2645. C.M. and C.Q. are partially funded by a DFG Emmy Noether grant (468502433) and RWTH Junior Principal Investigator Fellowship under Germany’s Excellence Strategy
\end{ack}

\clearpage


\bibliographystyle{abbrvnat}

\bibliography{reference}

@article{Helmberg1998,
  title={Solving quadratic (0, 1)-problems by semidefinite programs and cutting planes},
  author={Helmberg, Christoph and Rendl, Franz},
  journal={Mathematical programming},
  volume={82},
  number={3},
  pages={291--315},
  year={1998},
  publisher={Springer}
}

@article{madam,
   author = {Timotej Hrga and Janez Povh},
   doi = {10.1007/s10589-021-00310-6},
   issn = {1573-2894},
   issue = {2},
   journal = {Computational Optimization and Applications},
   pages = {347-375},
   title = {MADAM: a parallel exact solver for max-cut based on semidefinite programming and ADMM},
   volume = {80},
   year = {2021}
}

@article{biqcrunch,
author = {Krislock, Nathan and Malick, J\'{e}r\^{o}me and Roupin, Fr\'{e}d\'{e}ric},
title = {BiqCrunch: A Semidefinite Branch-and-Bound Method for Solving Binary Quadratic Problems},
year = {2017},
issue_date = {December 2017},
publisher = {Association for Computing Machinery},
address = {New York, NY, USA},
volume = {43},
number = {4},
issn = {0098-3500},
doi = {10.1145/3005345},
journal = {ACM Trans. Math. Softw.},
month = jan,
articleno = {32},
numpages = {23}
}

@article{chen2024learning,
  title={Learning to optimize: A tutorial for continuous and mixed-integer optimization},
  author={Chen, Xiaohan and Liu, Jialin and Yin, Wotao},
  journal={Science China Mathematics},
  volume={67},
  number={6},
  pages={1191--1262},
  year={2024},
  publisher={Springer}
}

@article{goemans1995improved,
  title={Improved approximation algorithms for maximum cut and satisfiability problems using semidefinite programming},
  author={Goemans, Michel X and Williamson, David P},
  journal={Journal of the ACM (JACM)},
  volume={42},
  number={6},
  pages={1115--1145},
  year={1995},
  publisher={ACM New York, NY, USA}
}

@inproceedings{rendl2007branch,
  title={A branch and bound algorithm for max-cut based on combining semidefinite and polyhedral relaxations},
  author={Rendl, Franz and Rinaldi, Giovanni and Wiegele, Angelika},
  booktitle={International Conference on Integer Programming and Combinatorial Optimization},
  pages={295--309},
  year={2007},
  organization={Springer}
}

@article{biqbin,
  title={BiqBin: a parallel branch-and-bound solver for binary quadratic problems with linear constraints},
  author={Gusmeroli, Nicolo and Hrga, Timotej and Lu{\v{z}}ar, Borut and Povh, Janez and Siebenhofer, Melanie and Wiegele, Angelika},
  journal={ACM Transactions on Mathematical Software (TOMS)},
  volume={48},
  number={2},
  pages={1--31},
  year={2022},
  publisher={ACM New York, NY}
}

@article {biqmac,
author={Rendl, Franz and Rinaldi, Giovanni and Wiegele, Angelika},
title={Solving {M}ax-{C}ut to Optimality by Intersecting Semidefinite and Polyhedral Relaxations},
journal = {Math. Programming},
volume = {121},
year = {2010},
number = {2},
pages = {307}
}

@inproceedings{Chen2021,
  title={Enforcing policy feasibility constraints through differentiable projection for energy optimization},
  author={Chen, Bingqing and Donti, Priya L and Baker, Kyri and Kolter, J Zico and Berg{\'e}s, Mario},
  booktitle={Proceedings of the Twelfth ACM International Conference on Future Energy Systems},
  pages={199--210},
  year={2021}
}

@article{Agrawal2019,
  title={Differentiable convex optimization layers},
  author={Agrawal, Akshay and Amos, Brandon and Barratt, Shane and Boyd, Stephen and Diamond, Steven and Kolter, J Zico},
  journal={Advances in Neural Information Processing Systems},
  year={2019}
}

@inproceedings{Amos2017,
  title={Optnet: Differentiable optimization as a layer in neural networks},
  author={Amos, Brandon and Kolter, J Zico},
  booktitle={International Conference on Machine Learning},
  year={2017},
  organization={PMLR}
}

@article{Li2023,
   author = {Meiyi Li and Soheil Kolouri and Javad Mohammadi},
   doi = {10.1109/access.2023.3285199},
   issn = {21693536},
   journal = {IEEE Access},
   pages = {59995-60004},
   publisher = {Institute of Electrical and Electronics Engineers Inc.},
   title = {Learning to solve optimization problems with hard linear constraints},
   volume = {11},
   year = {2023}
}

@inproceedings{Gonzalo2026,
  title={Enforcing Hard Linear Constraints in Deep Learning Models with Decision Rules},
  author={Flores, Gonzalo E Constante and Chen, Hao and Li, Can},
  booktitle={The Thirty-ninth Annual Conference on Neural Information Processing Systems},
  year={2025}
}

@InProceedings{Liang2023,
  title = 	 {Low Complexity Homeomorphic Projection to Ensure Neural-Network Solution Feasibility for Optimization over ({N}on-){C}onvex Set},
  author =       {Liang, Enming and Chen, Minghua and Low, Steven},
  booktitle = 	 {International Conference on Machine Learning},
  year = 	 {2023},
  publisher =    {PMLR},
}

@article{Liang2024,
   author = {Enming Liang and Minghua Chen and Steven H Low},
   journal = {Journal of Machine Learning Research},
   pages = {1-55},
   title = {Homeomorphic Projection to Ensure Neural-Network Solution Feasibility for Constrained Optimization},
   volume = {25},
   year = {2024}
}

@article{Tordesillas2023,
      title={RAYEN: Imposition of Hard Convex Constraints on Neural Networks}, 
      author={Jesus Tordesillas and Jonathan P. How and Marco Hutter},
      journal={arXiv preprint arXiv:2307.08336},
      year={2023}
}

@inproceedings{Donti2021,
  title={{DC3}: A learning method for optimization with hard constraints},
  author={Donti, Priya and Rolnick, David and Kolter, J Zico},
  booktitle={International Conference on Learning Representations},
  year={2021}
}

@article{bengio2021machine,
  title={Machine learning for combinatorial optimization: a methodological tour d'horizon},
  author={Bengio, Yoshua and Lodi, Andrea and Prouvost, Antoine},
  journal={European Journal of Operational Research},
  volume={290},
  number={2},
  pages={405--421},
  year={2021},
  publisher={Elsevier}
}

@article{burer2003nonlinear,
  title={A nonlinear programming algorithm for solving semidefinite programs via low-rank factorization},
  author={Burer, Samuel and Monteiro, Renato DC},
  journal={Mathematical programming},
  volume={95},
  number={2},
  pages={329--357},
  year={2003},
  publisher={Springer}
}

@article{Ach+2005,
  author    = {T. Achterberg and
               T. Koch and
               A. Martin},
  title     = {Branching rules revisited},
  journal   = {Operations Research Letters},
  volume    = {33},
  number    = {1},
  pages     = {42--54},
  year      = {2005}
}

@phdthesis{achterberg2007constraint,
  title={Constraint integer programming},
  author={Achterberg, Tobias},
  year={2007}
}

@article{Duv+2015,
  title={Convolutional Networks on Graphs for Learning Molecular Fingerprints},
  author={Duvenaud, David and Maclaurin, Dougal and Aguilera-Iparraguirre, Jorge and G{\'o}mez-Bombarelli, Rafael and Hirzel, Timothy and Aspuru-Guzik, Al{\'a}n and Adams, Ryan P},
  journal={arXiv preprint arXiv:1509.09292},
  year={2015}
}

@book{Kor+2012,
	author = {Korte, B. and Vygen, J.},
	title = {Combinatorial Optimization: Theory and Algorithms},
	year = {2012},
	edition = {5th},
	publisher = {Springer}
}

@inproceedings{xu2018how,
title={How Powerful are Graph Neural Networks?},
author={Keyulu Xu and Weihua Hu and Jure Leskovec and Stefanie Jegelka},
booktitle={International Conference on Learning Representations},
year={2019}
}

@InProceedings{Mor+2022b,
  author    = {Christopher Morris and Gaurav Rattan and Sandra Kiefer and Siamak Ravanbakhsh},
  booktitle = {ICML},
  title     = {{SpeqNets}: Sparsity-aware permutation-equivariant graph networks},
  year      = {2022}
}

@Article{Cap+2023,
  author  = {Quentin Cappart and Didier Ch{\'{e}}telat and Elias B. Khalil and Andrea Lodi and Christopher Morris and Petar Veli\v{c}kovi\'{c}},
  journal = {Journal of Machine Learning Research},
  title   = {Combinatorial Optimization and Reasoning with Graph Neural Networks},
  year    = {2023},
  number  = {130},
  pages   = {1--61},
  volume  = {24},
}

@article{Nai+2020,
  author       = {Vinod Nair and
                  Sergey Bartunov and
                  Felix Gimeno and
                  Ingrid von Glehn and
                  Pawel Lichocki and
                  Ivan Lobov and
                  Brendan O'Donoghue and
                  Nicolas Sonnerat and
                  Christian Tjandraatmadja and
                  Pengming Wang and
                  Ravichandra Addanki and
                  Tharindi Hapuarachchi and
                  Thomas Keck and
                  James Keeling and
                  Pushmeet Kohli and
                  Ira Ktena and
                  Yujia Li and
                  Oriol Vinyals and
                  Yori Zwols},
  title        = {Solving Mixed Integer Programs Using Neural Networks},
  journal      = {CoRR},
  volume       = {abs/2012.13349},
  year         = {2020},
  eprinttype   = {arXiv},
  eprint       = {2012.13349},
  bibsource    = {dblp computer science bibliography, https://dblp.org}
}

@InProceedings{Morris2020b,
  author    = {C. Morris and G. Rattan and P. Mutzel},
  booktitle = {NeurIPS},
  title     = {{Weisfeiler and Leman} Go Sparse: Towards Higher-Order Graph Embeddings},
  year      = {2020},
}

@InProceedings{Mor+2019,
  author    = {Morris, Christopher and Ritzert, Martin and Fey, Matthias and Hamilton, William L and Lenssen, Jan Eric and Rattan, Gaurav and Grohe, Martin},
  booktitle = {AAAI},
  title     = {{W}eisfeiler and {L}eman go neural: Higher-order graph neural networks},
  year      = {2019}
}

@InProceedings{Mar+2019,
  author    = {Haggai Maron and Heli Ben{-}Hamu and Hadar Serviansky and Yaron Lipman},
  booktitle = {NeurIPS},
  title     = {Provably Powerful Graph Networks},
  year      = {2019}
}

@inproceedings{Vel+2018,
title={Graph Attention Networks},
author={Petar Veličković and Guillem Cucurull and Arantxa Casanova and Adriana Romero and Pietro Liò and Yoshua Bengio},
booktitle={International Conference on Learning Representations},
year={2018}
}

@article{Ham+2017,
  title={Inductive representation learning on large graphs},
  author={Hamilton, Will and Ying, Zhitao and Leskovec, Jure},
  journal={NeurIPS},
  year={2017}
}

@Book{Gro2017,
  author    = {Grohe, Martin},
  publisher = {Cambridge University Press},
  title     = {Descriptive complexity, canonisation, and definable graph structure theory},
  year      = {2017}
}

@InProceedings{Kin+2015,
  author    = {Diederik P. Kingma and Jimmy Ba},
  booktitle = {ICLR},
  title     = {Adam: {A} Method for Stochastic Optimization},
  year      = {2015},
}

@Article{Sca+2009,
  author    = {Scarselli, Franco and Gori, Marco and Tsoi, Ah Chung and Hagenbuchner, Markus and Monfardini, Gabriele},
  journal   = {IEEE Transactions on Neural Networks},
  title     = {The graph neural network model},
  year      = {2008},
  number    = {1},
  pages     = {61--80},
  volume    = {20},
  publisher = {IEEE},
}

@Article{Cai+1992,
  author    = {Cai, Jin-Yi and F{\"u}rer, Martin and Immerman, Neil},
  journal   = {Combinatorica},
  title     = {An optimal lower bound on the number of variables for graph identification},
  year      = {1992},
  number    = {4},
  pages     = {389--410},
  volume    = {12},
  publisher = {Springer},
}

@InProceedings{Imm+1990,
  author    = {Immerman, N. and Lander, E.},
  booktitle = {Complexity Theory Retrospective: {I}n Honor of Juris Hartmanis on the Occasion of His Sixtieth Birthday, July 5, 1988},
  title     = {Describing Graphs: {A} First-Order Approach to Graph Canonization},
  year      = {1990},
  pages     = {59--81},
}

@Unpublished{Bab1979,
  author = {L{\'a}szl{\'o} Babai},
  note   = {University of Toronto, Department of Computer Science. Mimeographed lecture notes, October 1979},
  title  = {Lectures on Graph Isomorphism},
  year   = {1979},
}

@Article{Wei+1968,
  author  = {Weisfeiler, Boris and Leman, Andrei},
  journal = {nti, Series},
  title   = {The reduction of a graph to canonical form and the algebra which appears therein},
  year    = {1968},
  number  = {9},
  pages   = {12--16},
  volume  = {2},
}

@inproceedings{Gil+2017,
  title={Neural message passing for quantum chemistry},
  author={Gilmer, Justin and Schoenholz, Samuel S and Riley, Patrick F and Vinyals, Oriol and Dahl, George E},
  booktitle={International Conference on Machine Learning},
  year={2017}
}

@article{bresson2017residual,
  title={Residual gated graph {ConvNets}},
  author={Bresson, Xavier and Laurent, Thomas},
  journal={arXiv preprint arXiv:1711.07553v2},
  year={2017}
}

@inproceedings{chen2022representing,
title={On Representing Linear Programs by Graph Neural Networks},
author={Ziang Chen and Jialin Liu and Xinshang Wang and Wotao Yin},
booktitle={The Eleventh International Conference on Learning Representations },
year={2023}
}

@article{chen2024rethinking,
  title={Rethinking the capacity of graph neural networks for branching strategy},
  author={Chen, Ziang and Liu, Jialin and Chen, Xiaohan and Wang, Xinshang and Yin, Wotao},
  journal={Advances in Neural Information Processing Systems},
  year={2024}
}

@inproceedings{chen2023mip,
title={On Representing Mixed-Integer Linear Programs by Graph Neural Networks},
author={Ziang Chen and Jialin Liu and Xinshang Wang and Wotao Yin},
booktitle={The Eleventh International Conference on Learning Representations },
year={2023}
}

@inproceedings{chen2024qp,
title={Expressive Power of Graph Neural Networks for (Mixed-Integer) Quadratic Programs},
author={Ziang Chen and Xiaohan Chen and Jialin Liu and Xinshang Wang and Wotao Yin},
booktitle={Forty-second International Conference on Machine Learning},
year={2025}
}

@article{wu2024representingqcqp,
title={On Representing Convex Quadratically Constrained Quadratic Programs via Graph Neural Networks},
author={Chenyang Wu and Qian Chen and Akang Wang and Tian Ding and Ruoyu Sun and Wenguo Yang and Qingjiang Shi},
journal={Transactions on Machine Learning Research},
issn={2835-8856},
year={2025}
}

@inproceedings{qian2024exploring,
  title={Exploring the power of graph neural networks in solving linear optimization problems},
  author={Qian, Chendi and Ch{\'e}telat, Didier and Morris, Christopher},
  booktitle={AISTATS},
  year={2024},
}

@inproceedings{li2024onsmallgnn,
 author = {Li, Qian and Ding, Tian and Yang, Linxin and Ouyang, Minghui and Shi, Qingjiang and Sun, Ruoyu},
 booktitle = {NeurIPS},
 title = {On the Power of Small-size Graph Neural Networks for Linear Programming},
 year = {2024}
}

@inproceedings{li2024pdhg,
author = {Li, Bingheng and Yang, Linxin and Chen, Yupeng and Wang, Senmiao and Mao, Haitao and Chen, Qian and Ma, Yao and Wang, Akang and Ding, Tian and Tang, Jiliang and Sun, Ruoyu},
title = {PDHG-unrolled learning-to-optimize method for large-scale linear programming},
year = {2024},
publisher = {JMLR.org},
booktitle = {Proceedings of the 41st International Conference on Machine Learning},
articleno = {1173},
numpages = {17},
location = {Vienna, Austria},
series = {ICML'24}
}

@inproceedings{ding2020accelerating,
  title={Accelerating primal solution findings for mixed integer programs based on solution prediction},
  author={Ding, Jian-Ya and Zhang, Chao and Shen, Lei and Li, Shengyin and Wang, Bing and Xu, Yinghui and Song, Le},
  booktitle={AAAI},
  year={2020}
}

@inproceedings{khalil2022mip,
  title={Mip-gnn: A data-driven framework for guiding combinatorial solvers},
  author={Khalil, Elias B and Morris, Christopher and Lodi, Andrea},
  booktitle={AAAI},
  year={2022}
}

@inproceedings{khalil2016learning,
  title={Learning to branch in mixed integer programming},
  author={Khalil, Elias and Le Bodic, Pierre and Song, Le and Nemhauser, George and Dilkina, Bistra},
  booktitle={Proceedings of the AAAI conference on artificial intelligence},
  volume={30},
  number={1},
  year={2016}
}

@article{zhou2025branching,
  title={Branching Strategies Based on Subgraph GNNs: A Study on Theoretical Promise versus Practical Reality},
  author={Zhou, Junru and Wang, Yicheng and Li, Pan},
  journal={arXiv preprint arXiv:2512.09355},
  year={2025}
}

@inproceedings{
li2025towards,
title={Towards Explaining the Power of Constant-depth Graph Neural Networks for Structured Linear Programming},
author={Qian Li and Minghui Ouyang and Tian Ding and Yuyi Wang and Qingjiang Shi and Ruoyu Sun},
booktitle={ICLR},
year={2025}
}

@article{OptGNN,
  title={Are graph neural networks optimal approximation algorithms?},
  author={Yau, Morris and Karalias, Nikolaos and Lu, Eric and Xu, Jessica and Jegelka, Stefanie},
  journal={NeurIPS},
  year={2024}
}

@inproceedings{Li+2023,
  author       = {Sirui Li and
                  Wenbin Ouyang and
                  Max B. Paulus and
                  Cathy Wu},
  title        = {Learning to Configure Separators in Branch-and-Cut},
  booktitle    = {NeurIPS},
  year         = {2023}
}

@inproceedings{Dez+2023,
  author       = {Arnaud Deza and
                  Elias B. Khalil},
  title        = {Machine Learning for Cutting Planes in Integer Programming: {A} Survey},
  booktitle    = {IJCAI},
  year         = {2023}
}

@article{wen2010alternating,
  title={Alternating direction augmented Lagrangian methods for semidefinite programming},
  author={Wen, Zaiwen and Goldfarb, Donald and Yin, Wotao},
  journal={Mathematical Programming Computation},
  volume={2},
  number={3},
  pages={203--230},
  year={2010},
  publisher={Springer}
}

@article{o2021operator,
  title={Operator splitting for a homogeneous embedding of the linear complementarity problem},
  author={O'Donoghue, Brendan},
  journal={SIAM Journal on Optimization},
  volume={31},
  number={3},
  pages={1999--2023},
  year={2021},
  publisher={SIAM}
}

@article{mosek,
  title={Mosek optimizer API for python},
  author={ApS, Mosek},
  journal={Version},
  volume={9},
  number={17},
  pages={6--4},
  year={2022}
}

@article{vandenberghe1996semidefinite,
  title={Semidefinite programming},
  author={Vandenberghe, Lieven and Boyd, Stephen},
  journal={SIAM review},
  volume={38},
  number={1},
  pages={49--95},
  year={1996},
  publisher={SIAM}
}

@book{wolkowicz2012handbook,
  title={Handbook of semidefinite programming: theory, algorithms, and applications},
  author={Wolkowicz, Henry and Saigal, Romesh and Vandenberghe, Lieven},
  volume={27},
  year={2012},
  publisher={Springer Science \& Business Media}
}

@article{erdds1959random,
  title={On random graphs I},
  author={Erd{\H{o}}s, Paul and R{\'e}nyi, Alfr{\'e}d},
  journal={Publ. math. debrecen},
  volume={6},
  number={290-297},
  pages={18},
  year={1959}
}

@article{barahona1988application,
  author    = {Francisco Barahona and Martin Gr{\"o}tschel and Michael J{\"u}nger and Gerhard Reinelt},
  title     = {An Application of Combinatorial Optimization to Statistical Physics and Circuit Layout Design},
  journal   = {Operations Research},
  volume    = {36},
  number    = {3},
  pages     = {493--513},
  year      = {1988},
  doi       = {10.1287/opre.36.3.493}
}

@incollection{poljak1995maxcutsurvey,
  author    = {Svatopluk Poljak and Zsolt Tuza},
  title     = {The Max-Cut Problem --- A Survey},
  booktitle = {Special Year on Combinatorial Optimization},
  publisher = {American Mathematical Society},
  year      = {1995}
}

@article{wang2024tuningfree,
  title={A tuning-free primal-dual splitting algorithm for large-scale semidefinite programming},
  author={Wang, Yinjun and Lan, Haixiang and Ye, Yinyu},
  journal={arXiv preprint arXiv:2402.00311},
  year={2024}
}

@article{chambolle2016ergodic,
  title={On the ergodic convergence rates of a first-order primal--dual algorithm},
  author={Chambolle, Antonin and Pock, Thomas},
  journal={Mathematical Programming},
  volume={159},
  number={1},
  pages={253--287},
  year={2016},
  publisher={Springer}
}

@article{han2025low,
  title={A low-rank admm splitting approach for semidefinite programming},
  author={Han, Qiushi and Li, Chenxi and Lin, Zhenwei and Chen, Caihua and Deng, Qi and Ge, Dongdong and Liu, Huikang and Ye, Yinyu},
  journal={INFORMS Journal on Computing},
  year={2025},
  publisher={INFORMS}
}

@book{horn1994topics,
  title={Topics in matrix analysis},
  author={Horn, Roger A and Johnson, Charles R},
  year={1994},
  publisher={Cambridge university press}
}

@article{ba2016layer,
  title={Layer normalization},
  author={Ba, Jimmy Lei and Kiros, Jamie Ryan and Hinton, Geoffrey E},
  journal={arXiv preprint arXiv:1607.06450},
  year={2016}
}

@inproceedings{cai2021graphnorm,
  title={Graphnorm: A principled approach to accelerating graph neural network training},
  author={Cai, Tianle and Luo, Shengjie and Xu, Keyulu and He, Di and Liu, Tie-yan and Wang, Liwei},
  booktitle={International Conference on Machine Learning},
  year={2021}
}

@inproceedings{park2023self,
  title={Self-supervised primal-dual learning for constrained optimization},
  author={Park, Seonho and Van Hentenryck, Pascal},
  booktitle={AAAI},
  year={2023}
}

@article{tanneau2024dual,
  title={Dual lagrangian learning for conic optimization},
  author={Tanneau, Mathieu and Van Hentenryck, Pascal},
  journal={NeurIPS},
  year={2024}
}

@inproceedings{
li2025on,
title={On the Expressivity of {GNN} for Solving Second Order Cone Programming},
author={Ruizhe Li and Enming Liang and Minghua Chen},
booktitle={NeurIPS Workshop on GPU-Accelerated and Scalable Optimization},
year={2025}
}

@InProceedings{Paulus2022,
  title = 	 {Learning to Cut by Looking Ahead: Cutting Plane Selection via Imitation Learning},
  author =       {Paulus, Max B and Zarpellon, Giulia and Krause, Andreas and Charlin, Laurent and Maddison, Chris},
  booktitle = 	 {Proceedings of the 39th International Conference on Machine Learning},
  pages = 	 {17584--17600},
  year = 	 {2022},
  editor = 	 {Chaudhuri, Kamalika and Jegelka, Stefanie and Song, Le and Szepesvari, Csaba and Niu, Gang and Sabato, Sivan},
  volume = 	 {162},
  series = 	 {Proceedings of Machine Learning Research},
  month = 	 {17--23 Jul},
  publisher =    {PMLR}
}

@inbook{Gasse2019,
author = {Gasse, Maxime and Ch\'{e}telat, Didier and Ferroni, Nicola and Charlin, Laurent and Lodi, Andrea},
title = {Exact combinatorial optimization with graph convolutional neural networks},
year = {2019},
publisher = {Curran Associates Inc.},
address = {Red Hook, NY, USA},
booktitle = {Proceedings of the 33rd International Conference on Neural Information Processing Systems},
articleno = {1396},
numpages = {13}
}

@article{qian2025principled,
  title={Principled data augmentation for learning to solve quadratic programming problems},
  author={Qian, Chendi and Morris, Christopher},
  journal={arXiv preprint arXiv:2506.01728},
  year={2025}
}

@article{alvarez2017machine,
  title={A machine learning-based approximation of strong branching},
  author={Alvarez, Alejandro Marcos and Louveaux, Quentin and Wehenkel, Louis},
  journal={INFORMS Journal on Computing},
  volume={29},
  number={1},
  pages={185--195},
  year={2017},
  publisher={INFORMS}
}

@inproceedings{zarpellon2021parameterizing,
  title={Parameterizing branch-and-bound search trees to learn branching policies},
  author={Zarpellon, Giulia and Jo, Jason and Lodi, Andrea and Bengio, Yoshua},
  booktitle={Proceedings of the aaai conference on artificial intelligence},
  volume={35},
  number={5},
  pages={3931--3939},
  year={2021}
}

@article{lin2022learning,
  title={Learning to branch with tree-aware branching transformers},
  author={Lin, Jiacheng and Zhu, Jialin and Wang, Huangang and Zhang, Tao},
  journal={Knowledge-Based Systems},
  volume={252},
  pages={109455},
  year={2022},
  publisher={Elsevier}
}

@article{scavuzzo2024machine,
  title={Machine learning augmented branch and bound for mixed integer linear programming},
  author={Scavuzzo, Lara and Aardal, Karen and Lodi, Andrea and Yorke-Smith, Neil},
  journal={Mathematical Programming},
  pages={1--44},
  year={2024},
  publisher={Springer}
}

@article{pan2022deepopf,
  title={DeepOPF: A feasibility-optimized deep neural network approach for AC optimal power flow problems},
  author={Pan, Xiang and Chen, Minghua and Zhao, Tianyu and Low, Steven H},
  journal={IEEE Systems Journal},
  volume={17},
  number={1},
  pages={673--683},
  year={2022},
  publisher={IEEE}
}

@inproceedings{tang2024learning,
title={Learning to Optimize for Mixed-Integer Non-linear Programming with Feasibility Guarantees},
author={Bo Tang and Elias Boutros Khalil and Jan Drgona},
booktitle={Workshop on Differentiable Learning of Combinatorial Algorithms},
year={2025}
}

@inproceedings{fioretto2020predicting,
  title={Predicting ac optimal power flows: Combining deep learning and lagrangian dual methods},
  author={Fioretto, Ferdinando and Mak, Terrence WK and Van Hentenryck, Pascal},
  booktitle={Proceedings of the AAAI conference on artificial intelligence},
  volume={34},
  number={01},
  pages={630--637},
  year={2020}
}

@article{kotary2024learning,
  title={Learning constrained optimization with deep augmented lagrangian methods},
  author={Kotary, James and Fioretto, Ferdinando},
  journal={arXiv preprint arXiv:2403.03454},
  year={2024}
}

@inproceedings{fioretto2020lagrangian,
  title={Lagrangian duality for constrained deep learning},
  author={Fioretto, Ferdinando and Van Hentenryck, Pascal and Mak, Terrence WK and Tran, Cuong and Baldo, Federico and Lombardi, Michele},
  booktitle={Joint European conference on machine learning and knowledge discovery in databases},
  pages={118--135},
  year={2020},
  organization={Springer}
}

@article{qian2025towards,
  title={Towards graph neural networks for provably solving convex optimization problems},
  author={Qian, Chendi and Morris, Christopher},
  journal={arXiv preprint arXiv:2502.02446},
  year={2025}
}

@inproceedings{
rashwan2025enforcing,
title={Enforcing convex constraints in Graph Neural Networks},
author={Ahmed Rashwan and Keith Briggs and Chris Budd and Lisa Maria Kreusser},
booktitle={The Thirty-ninth Annual Conference on Neural Information Processing Systems},
year={2026}
}

@article{qian2026gnnsdp,
      title={On the Expressive Power of GNNs to Solve Linear SDPs}, 
      author={Chendi Qian and Christopher Morris},
      journal={arXiv preprint arXiv:2604.27786},
      year={2026}
}

\clearpage


\appendix

\section{Extended Background}\label{sec:extended_background}

\subsection{Higher-Order Weisfeiler-Leman}
\label{sec:more_wl}

Let $n \ge 1$ and $[n] \coloneqq \{1, \dotsc, n\}$. We denote multisets with $\dgal{\dots}$. A labeled graph $G = (V, E, l)$ consists of a finite node set $V$, an edge set $E$, and a labeling function $l \colon V \to \mathbb{N}$. The neighborhood of a node $v \in V$ is defined as $N(v) \coloneqq \{u \in V \mid (u,v) \in E\}$. Two graphs $G$ and $H$ are isomorphic ($G \simeq H$) if there exists an adjacency- and label-preserving bijection between their node sets. For $k$-tuples $\vec{v}, \vec{w} \in V^k$, we say they share the same atomic type, denoted $\mathsf{atp}(\vec{v}) = \mathsf{atp}(\vec{w})$, if the coordinate mapping $v_i \mapsto w_i$ induces a partial isomorphism.

The 1-Weisfeiler-Leman ($\WLk{1}$) algorithm, also known as color refinement, is a standard heuristic for testing graph isomorphism~\citep{Wei+1968}. It determines if two graphs are non-isomorphic by iteratively updating node colors based on local neighborhoods. Initialized with $\vec{c}^0_v \coloneq l(v)$, the color of node $v$ at iteration $t > 0$ is updated via an injective hash function:
\begin{equation*}
\vec{c}^{t}_v \coloneq \mathsf{hash} \mleft( \vec{c}^{t-1}_v, \dgal[\big]{ \vec{c}^{t-1}_u \mid u \in N(v) } \mright).
\end{equation*}

The algorithm terminates when the partition of colors stabilizes, i.e., $\vec{c}^t_v = \vec{c}^t_w \iff \vec{c}^{t+1}_v = \vec{c}^{t+1}_w$ for all $v, w \in V$, yielding a \new{stable coloring} $\vec{c}^{\infty}_v$. If two graphs yield a differing number of nodes for any stable color, $\WLk{1}$ successfully distinguishes them~\citep{Cai+1992}.

To distinguish more complex symmetries where $\WLk{1}$ fails, the hierarchy generalizes to $k$-tuples where $k \ge 2$, yielding the $k$-dimensional WL ($\WLk{k}$) and its folklore variant ($\FWLk{k}$)~\citep{Bab1979, Imm+1990, Cai+1992, Gro2017}. Both algorithms initialize tuple colors using their atomic types: $\vec{c}^0_{\vec{v}} \coloneq \mathsf{atp}(\vec{v})$. They differ primarily in how they aggregate information when substituting a node $w \in V$ into the $j$-th position of $\vec{v}$, denoted as $\phi_j(\vec{v},w) \coloneq (v_1, \dots, v_{j-1}, w, v_{j+1}, \dots, v_k)$.

The $\FWLk{k}$ algorithm captures the \emph{joint} configuration of these substitutions across all tuple positions simultaneously:
\begin{equation*}
\vec{c}^{t}_{\vec{v}} \coloneq \mathsf{hash} \mleft( \vec{c}^{t-1}_{\vec{v}}, \dgal[\Big]{ \left( \vec{c}^{t-1}_{\phi_1(\vec{v},w)}, \dots,  \vec{c}^{t-1}_{\phi_k(\vec{v},w)} \right) \mid w \in V } \mright).
\end{equation*}
In contrast, standard $\WLk{k}$ aggregates the multiset of each substitution position \emph{independently}:
\begin{equation*}
\vec{c}^{t}_{\vec{v}} \coloneq \mathsf{hash} \mleft( \vec{c}^{t-1}_{\vec{v}}, \left( \dgal{ \vec{c}^{t-1}_{\phi_1(\vec{v},w)} \mid w \in V }, \dots, \dgal{ \vec{c}^{t-1}_{\phi_k(\vec{v},w)} \mid w \in V } \right) \mright).
\end{equation*}
Both algorithms iterate until the tuple colors stabilize.

We quantify expressive power using refinement relations. An algorithm $\mathcal{A}$ \emph{refines} $\mathcal{B}$, denoted $\mathcal{A} \sqsubseteq \mathcal{B}$, if $\mathcal{A}$ can distinguish all pairs of graphs that $\mathcal{B}$ can distinguish. Strict refinement is denoted as $\mathcal{A} \sqsubset \mathcal{B}$. Because $\FWLk{k}$ aggregates joint neighborhood structures rather than marginal distributions, it is strictly more expressive than $\WLk{k}$ for $k \ge 2$. Specifically, $\FWLk{k}$ is proven to be equivalent in power to $\WLk{(k\mathrm{+}1)}$~\citep{Cai+1992, Gro2017}, establishing the following strict expressivity hierarchy:
$$ \dots \equiv \FWLk{(k+1)} \sqsubset \WLk{(k+1)} \equiv \FWLk{k} \sqsubset \WLk{k} \equiv \FWLk{(k\mathrm{-}1)} \sqsubset \dots $$

\subsection{Branch-and-Bound Method}\label{sec:bab_algo}
The branch-and-bound method is a tree-search algorithm for solving combinatorial optimization problems exactly.

As shown in~\cref{alg:bab}, the solver starts with the SDP relaxation of the Max-Cut Problem at the B\&B root node, and repeatedly selects an active node from the queue to create subproblems. By default, we select the node with the highest upper bound to explore. Each subproblem is generated by branching on an unfixed variable in the current graph. In our implementation, the branching variable is chosen by the most-fractional rule. The branching variable is then fixed to the two possible partitions, producing two child nodes. 

The subproblem associated with each child node is relaxed and evaluated as an SDP. This returns an upper bound together with a primal SDP solution. The Goemans--Williamson algorithm then rounds this primal solution to a feasible cut. Specifically, the primal solution is factorized as $X=VV^T$ with row vectors $v_i$. Random vectors $r$ are sampled to induce a partition, where the variables with $\{i | v_i^\top r \geq 0\}$ can be assigned to one partition and the remaining variables are assigned to the other. The feasible cut is then further improved by a local search that flips individual variables to increase the lower bound. If the new cut is better, the incumbent will be updated. Since the SDP relaxation provides an upper bound on the best-cut value achievable by the current subproblem, this child node can be safely pruned whenever its SDP optimal objective value is no better than the incumbent lower bound. The pruning condition is $\mathrm{UB}_c < \mathrm{LB} + 1$, as all edge weights are integers. In this case, neither this node nor any of its descendants can yield a better feasible cut. Otherwise, the node remains active and is added back to the queue.   

In this work, we consider three branch-and-bound solvers: (i) a vanilla SDP-based B\&B solver; (ii) a hybrid B\&B solver; and (iii) a neural B\&B solver. While the vanilla solver uses Mosek to solve the SDP relaxation at the node-evaluation step, where $\mathrm{UB}_c = f^*_{\mathrm{SDP}}$, the hybrid and neural solvers apply the proposed GNN to predict a dual-feasible solution and generate a valid upper bound such that $f^*_{\mathrm{SDP}} \leq f_{\mathrm{GNN}}$. Therefore, the same pruning condition can be applied directly if $f_{\mathrm{GNN}} < \mathrm{LB} + 1$. The hybrid solver uses the GNN for early fathoming. At each node, the GNN is queried first to produce $f_{\mathrm{GNN}}$. If this bound already prunes the node, the exact SDP evaluation with Mosek will be bypassed. Otherwise, Mosek is invoked to compute the tighter bound, $f_{\mathrm{SDP}}$, and the algorithm proceeds exactly as in the vanilla solver. As a result, the hybrid solver preserves the same branch-and-bound tree, but changes the cost of node evaluation. The nodes pruned by GNN avoid solving the SDP with Mosek, whereas surviving nodes incur the cost of both GNN inference and the subsequent Mosek solve. In contrast, the neural solver differs more significantly from the vanilla solver. The Mosek solve is removed entirely from this solver, and the GNN is responsible for the prediction of both primal and dual solutions. Specifically, the neural solver considers $\mathrm{UB}_c = f_{\mathrm{GNN}}$ and uses the predicted primal solution in the Goemans--Williamson algorithm. These changes can alter node priorities in the queue and branching decisions, resulting in a different branch-and-bound tree. Despite this different search trajectory, the neural solver can explore nodes much more efficiently by combining lightweight GNN inference with batched evaluation of child nodes.

\begin{algorithm}[t]
\caption{branch and bound for Max-Cut}
\label{alg:bab}
\begin{algorithmic}[1]
\Require Problem $P$, relaxation method $\mathcal{R} \in \{\textproc{Vanilla}, \textproc{Hybrid}, \textproc{Neural}\}$, top $K$ nodes
\State $Q \gets \emptyset$ \Comment{Queue}
\State $(\mathrm{UB}_r,x_r) \gets \textproc{EvaluateNode}(P,\mathcal{R})$ \Comment{Root upper bound}
\State $x_r^{\mathrm{heur}} \gets \textproc{GW-Algorithm}(x_r)$
\State $x_{\mathrm{LB}} \gets x_r^{\mathrm{heur}}$, $\mathrm{LB} \gets x_r^{\mathrm{heur}} L_{r} x_r^{\mathrm{heur}}$ \Comment{Best solution and lower bound}
\State $\textproc{Push}(Q,r)$
\While{$Q \neq \emptyset$}
    \State $\mathcal{N} \gets \textproc{PopMany}(Q,K)$ \Comment{Best (top K, if $\mathcal{R}=\textproc{Neural}$) upper bound first}
    \State $\mathcal{C} \gets [\ ]$
    \ForAll{$n \in \mathcal{N}$}
        \If{$\mathrm{LB} + 1 \leq \mathrm{UB}_n$}
            \State $j^* \gets \textproc{MostFractional}(n)$
            \State $(c^{\mathrm{down}}, c^{\mathrm{up}}) \gets \textproc{Branch}(n,j^*)$
            \State $\mathcal{C} \gets \textproc{Append}(\mathcal{C}, c^{\mathrm{down}}, c^{\mathrm{up}})$
        \EndIf
    \EndFor

    \ForAll{$c \in \mathcal{C}$}
        \State $(\mathrm{UB}_c,x_c) \gets \textproc{EvaluateNode} (c,\mathcal{R})$ \Comment{Parallel if $\mathcal{R}=\textproc{Neural}$; sequential otherwise}
        \State $x_c^{\mathrm{heur}} \gets \textproc{GW-Algorithm}(x_c)$ \Comment{Parallel if $\mathcal{R}=\textproc{Neural}$; sequential otherwise}
        \If{$x_c^{\mathrm{heur}} L_{c} x_c^{\mathrm{heur}} > \mathrm{LB}$}
            \State $x_{\mathrm{LB}} \gets x_c^{\mathrm{heur}}$, $\mathrm{LB} \gets x_c^{\mathrm{heur}} L_{c} x_c^{\mathrm{heur}}$ \Comment{Update lower bound}
        \EndIf

        \If{$\mathrm{UB}_c < \mathrm{LB} + 1$}
            \State $\textproc{PruneNode}(c)$ \Comment{Prune by optimality}
        \Else
            \State $\textproc{Push}(Q, c)$
        \EndIf
    \EndFor
    
\EndWhile
\State \Return $(x_{\mathrm{LB}},\mathrm{LB})$
\end{algorithmic}
\end{algorithm}

\section{Omitted proofs}
\label{sec:proof}

In the following, we outline proofs omitted from the main paper.

\subsection{Proof of \cref{thm:sparsewl}}
\begin{theorem}[Restatement of \cref{thm:sparsewl}]
Let $\vec{X}^* \in \mathbb{S}^n_+$ and $\vec{y}^*$ denote the optimal primal and dual solutions, respectively, for a given instance of the Max-Cut SDP relaxation. For any indices $(i, j)$ and $(p, q)$, if the stable colors produced by $\dmcwl$ satisfy $\vec{v}^{\infty}_{ij} = \vec{v}^{\infty}_{pq}$, then the corresponding primal entries satisfy $X^*_{ij} = X^*_{pq}$. Furthermore, if the diagonal stable colors satisfy $\vec{v}^{\infty}_{kk} = \vec{v}^{\infty}_{ll}$, then the dual solution satisfies $y^*_k = y^*_l$.
\end{theorem}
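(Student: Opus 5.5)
The plan is a symmetrization argument: show that the optimal solutions the theorem refers to lie in a linear space $\mathcal{B}$ of matrices that are constant on the stable color classes of $\dmcwl$. As in the discussion after the radial projection, I take $\vec{X}^*$ to be the unique minimum-Frobenius-norm primal optimum, and $\vec{y}^*$ the unique minimum-norm dual optimum. For each stable color $a$, let $\vec{P}_a \in \{0,1\}^{n\times n}$ be the indicator of $\{(i,j) : \vec{v}^\infty_{ij}=a\}$, and set $\mathcal{B} \coloneqq \mathrm{span}\{\vec{P}_a\}$.

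\textbf{Step 1: closure properties from stability.} The initial colors encode $C_{ij}$ and $\mathbb{I}_{i=j}$. Hence $\vec{C}\in\mathcal{B}$, the identity lies in $\mathcal{B}$, and the diagonal class indicators $\vec{D}_k$ (diagonal colors) lie in $\mathcal{B}$. Stability of \eqref{def:dmcwl} says that the color of $(i,j)$ determines the multiset $\dgal{(\vec{v}^\infty_{iu},C_{uj}) \mid C_{uj}\neq 0}\oplus\dgal{(\vec{v}^\infty_{uj},C_{iu}) \mid C_{iu}\neq 0}$. From this I want $(\vec{P}_a\vec{C})_{ij}+(\vec{C}\vec{P}_a)_{ij}$ to be a function of $\vec{v}^\infty_{ij}$. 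I then want to separate the two summands using the symmetry $\vec{v}^\infty_{ij}=\vec{v}^\infty_{ji}$ together with the symmetry of $\vec{C}$, so that $\vec{P}_a\vec{C},\,\vec{C}\vec{P}_a\in\mathcal{B}$.

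With that in hand, I would prove by induction on word length that the unital algebra $\mathcal{A}$ generated by $\vec{C}$ and the $\vec{D}_k$ is contained in $\mathcal{B}$. Multiplication by $\vec{C}$ is handled by the step above. Multiplication by $\vec{D}_k$ requires that the color of $(i,j)$ determine the colors of $(i,i)$ and $(j,j)$. In $\FWLk{2}$ this comes from the term $u=i$, but here the aggregation is sparse, so this is the first delicate point. If it fails, I would replace $\mathcal{B}$ by the coarsest coloring that both refines the diagonal information and is stable, and argue that $\dmcwl$ stable colors refine it.

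As a consequence, $\vec{L}=\mathrm{Diag}(\vec{C}\mathbf{e})-\vec{C}$ lies in $\mathcal{A}$, because $(\vec{C}\mathbf{e})_i$ is determined by the color of $(i,i)$.

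\textbf{Step 2: conditional expectation.} The algebra $\mathcal{A}$ is a unital $*$-subalgebra of symmetric-closed matrices, since $\vec{C}$ and the $\vec{D}_k$ are symmetric. Let $\mathcal{E}$ be the trace-preserving conditional expectation onto $\mathcal{A}$, which is the Frobenius-orthogonal projection. I would use the standard facts that $\mathcal{E}$ maps $\mathbb{S}^n_+$ into $\mathbb{S}^n_+$ and does not increase the Frobenius norm.

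For the primal, $\langle\vec{L},\mathcal{E}(\vec{X}^*)\rangle=\langle\vec{L},\vec{X}^*\rangle$ because $\vec{L}\in\mathcal{A}$. Also, $\langle\vec{D}_k,\mathcal{E}(\vec{X}^*)\rangle=\langle\vec{D}_k,\vec{X}^*\rangle$. Since elements of $\mathcal{A}\subseteq\mathcal{B}$ are constant on diagonal classes, the diagonal of $\mathcal{E}(\vec{X}^*)$ equals the class average of ones, so $\mathrm{diag}(\mathcal{E}(\vec{X}^*))=\mathbf{e}$. Thus $\mathcal{E}(\vec{X}^*)$ is optimal with norm at most that of $\vec{X}^*$. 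Uniqueness of the minimum-norm optimum then forces $\vec{X}^*=\mathcal{E}(\vec{X}^*)\in\mathcal{B}$, which gives the primal claim.

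For the dual, let $\bar{\vec{y}}$ be $\vec{y}^*$ averaged over diagonal classes. I would show $\mathcal{E}(\mathrm{Diag}(\vec{y}^*))=\mathrm{Diag}(\bar{\vec{y}})$, using that the $\vec{D}_k$ span the diagonal part of $\mathcal{A}$. Then $\mathrm{Diag}(\bar{\vec{y}})-\vec{L}=\mathcal{E}(\vec{S}^*)\succeq 0$, the objective $\mathbf{e}\trans\bar{\vec{y}}=\mathbf{e}\trans\vec{y}^*$ is unchanged, and $\lVert\bar{\vec{y}}\rVert\le\lVert\vec{y}^*\rVert$. Uniqueness again gives $\vec{y}^*=\bar{\vec{y}}$.

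\textbf{Main obstacle.} The crux is Step 1, namely extracting genuine algebra closure ($\vec{P}_a\vec{C}$, $\vec{C}\vec{P}_a$, and diagonal multiplication staying in $\mathcal{B}$) from the merged, sparsity-restricted multisets of $\dmcwl$. The second point needing care in Step 2 is the identity $\mathcal{E}(\mathrm{Diag}(\vec{y}^*))=\mathrm{Diag}(\bar{\vec{y}})$, i.e., that the projection of a diagonal matrix onto $\mathcal{A}$ stays diagonal. I would attack this by mimicking the corresponding lemma for $\vctfwl$ in \citet{qian2026gnnsdp}, where the same decomposition of $\vec{C}$-weighted sums appears.
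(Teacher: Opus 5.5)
\paragraph{The gap: Step 1 is false, not just delicate.}
Your proposal takes a different route from the paper, but it breaks at Step 1. Because $\oplus$ is order-invariant, the stable coloring is symmetric, $\vec{v}^\infty_{ij}=\vec{v}^\infty_{ji}$, so every $\vec{P}_a$ and every element of $\mathcal{B}$ is a symmetric matrix. Since $(\vec{P}_a\vec{C})\trans=\vec{C}\vec{P}_a$, the inclusion $\vec{P}_a\vec{C}\in\mathcal{B}$ would force $\vec{P}_a\vec{C}=\vec{C}\vec{P}_a$.

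For a diagonal class indicator this fails as soon as a single edge joins class $k$ to its complement: then $(\vec{D}_k\vec{C})_{ij}=C_{ij}\neq 0=(\vec{C}\vec{D}_k)_{ij}$ for $i\in k$, $j\notin k$. That happens on essentially every connected instance whose weighted degrees are not all equal. So the algebra $\mathcal{A}$ generated by $\vec{C}$ and the $\vec{D}_k$ is not contained in $\mathcal{B}$.

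Your fallback fails for the same reason. Any partition whose span is closed under multiplication by $\vec{C}$ and contains the $\vec{D}_k$ contains the non-symmetric matrix $\vec{D}_k\vec{C}$. It therefore has non-symmetric classes, and the symmetric $\dmcwl$ partition cannot refine it. Without $\mathcal{A}\subseteq\mathcal{B}$, Step 2 only gives $\vec{X}^*\in\mathcal{A}$, which says nothing about $\dmcwl$ classes.

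What stability genuinely yields is weaker. You get the symmetrized product $\vec{A}\vec{C}+\vec{C}\vec{A}\in\mathcal{B}$ for $\vec{A}\in\mathcal{B}$. You also get the unordered pair $\{\vec{v}^\infty_{ii},\vec{v}^\infty_{jj}\}$, but only at pairs with $C_{ij}\neq 0$: these are the $u=i$ and $u=j$ terms, recognizable by their diagonal colors.

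\paragraph{What can be salvaged, and what is still missing.}
Your symmetrization step survives if you replace the $*$-algebra by a Jordan algebra. Suppose $\mathcal{J}\subseteq\mathbb{S}^n$ contains $\vec{I}$ and is closed under squaring, and let $\mathcal{E}$ be the Frobenius projection onto it. The spectral projectors $\vec{Q}_k$ of any $\vec{Y}=\mathcal{E}(\vec{X})\in\mathcal{J}$ lie in $\mathcal{J}$. Hence $\lambda_k\,\mathrm{tr}(\vec{Q}_k)=\langle\vec{X},\vec{Q}_k\rangle\ge 0$, so $\mathcal{E}$ preserves $\mathbb{S}^n_+$. With $\vec{L},\vec{D}_k\in\mathcal{J}\subseteq\mathcal{B}$, your primal argument then goes through verbatim.

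Your second obstacle also disappears. Complementary slackness $(\mathrm{Diag}(\vec{y}^*)-\vec{L})\vec{X}^*=0$ with $X^*_{ii}=1$ gives $y^*_i=(\vec{L}\vec{X}^*)_{ii}$. This is a function of $\vec{v}^\infty_{ii}$ once $\vec{X}^*\in\mathcal{B}$.

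The missing idea is then exactly the existence of such a $\mathcal{J}$ inside $\mathcal{B}$. You need closure under Jordan products beyond multiplication by $\vec{C}$. For example, $\vec{D}_k\vec{A}+\vec{A}\vec{D}_k$ requires $\{\vec{v}^\infty_{ii},\vec{v}^\infty_{jj}\}$ at pairs with $C_{ij}=0$. Likewise, $\vec{A}^2$ for a dense $\vec{A}\in\mathcal{B}$ requires joint information over all $u$. The sparse, $\vec{C}$-only aggregation supplies neither directly.

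\paragraph{Comparison with the paper.}
The paper avoids algebras altogether. It simulates PDHG on the $\varepsilon$-regularized SDP from the zero initialization and shows by induction that $\vec{v}^\infty$ refines every iterate. The PSD projection is handled by its Lemma: stable colors refine all powers of the aggregation matrix, hence its Sylvester projectors.

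Be aware, however, that the paper invokes that Lemma for $\vec{Z}^t$, which contains $\mathrm{Diag}(\vec{y}^t)$ and the dense $\vec{X}^t$. The Lemma is proved only for powers of $\vec{C}$ itself. Refining powers of $\vec{Z}^t$ needs precisely the Jordan-type closure above, so this is the point any complete proof must settle, along either route.
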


We first prove a lemma that our $\dmcwl$ refines the spectral property of a symmetric matrix, similar to $\vctfwl$ in prior work \citep{qian2026gnnsdp}, but different in derivation details.

\begin{lemma}
\label{lem:multiset_spectral}
Let $\vec{M} \in \mathbb{S}^{n}$ be a symmetric matrix with spectral decomposition 
\begin{equation*}
\vec{M} = \sum_{k = 1}^m \lambda_k \vec{P}_k
\end{equation*}
where $\lambda_1 > \lambda_2 \cdots > \lambda_m$ are the non repeating eigenvalues, and $\vec{P}_k$ are Frobenius covariants \citep[p.~437]{horn1994topics}, i.e., the symmetric matrices describing the projection onto the eigenspace of eigenvalue $\lambda_k$. 

The stable coloring $\vec{v}^\infty$ are produced by the multiset $\dmcwl$ algorithm:
\begin{align*}
\vec{v}_{ij}^0 &\coloneq \mathsf{hash} \left( M_{ij}, \mathbb{I}_{i=j} \right) \\
\vec{v}_{ij}^t &\coloneq \mathsf{hash} \Bigl( \vec{v}_{ij}^{t-1},\dgal[\big]{ (\vec{v}_{iu}^{t-1}, M_{uj}) \mid u \in [n], M_{uj}\neq 0} \oplus \dgal[\big]{ (\vec{v}_{uj}^{t-1}, M_{iu}) \mid u \in [n], M_{iu}\neq 0} \Bigr)
\end{align*}
where $\mathbb{I}_{i=j}$ is a diagonal indicator which takes value 1 if $i=j$ otherwise 0. 
The following result holds:
\begin{equation*}
\vec{v}_{ij}^{\infty} = \vec{v}_{pq}^{\infty} \implies (\vec{P}_k)_{ij} = (\vec{P}_k)_{pq}, \quad \text{ for all } k \in [m].
\end{equation*}
\end{lemma}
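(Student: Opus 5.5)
We will show that the stable $\dmcwl$ coloring determines the entries of every matrix power $\vec{M}^r$ for $r \ge 0$. We then recover the Frobenius covariants as polynomials in $\vec{M}$.

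\textbf{Step 1: matrix powers.} We argue by induction on $r$ that $\vec{v}^{\infty}_{ij} = \vec{v}^{\infty}_{pq}$ implies $(\vec{M}^r)_{ij} = (\vec{M}^r)_{pq}$.

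For $r=0$, $(\vec{M}^0)_{ij} = \mathbb{I}_{i=j}$, which is encoded in $\vec{v}^0_{ij}$. Since stable colors refine all earlier colors, the claim holds.

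For the inductive step, write
\begin{equation*}
(\vec{M}^{r+1})_{ij} = \sum_{u : M_{uj} \neq 0} (\vec{M}^r)_{iu} M_{uj} .
\end{equation*}
By stability, the color $\vec{v}^{\infty}_{ij}$ determines the update input, i.e.\ the combination of the two multisets $\dgal{(\vec{v}^{\infty}_{iu}, M_{uj}) \mid M_{uj}\neq 0}$ and $\dgal{(\vec{v}^{\infty}_{uj}, M_{iu}) \mid M_{iu}\neq 0}$. By the induction hypothesis, each $\vec{v}^{\infty}_{iu}$ determines $(\vec{M}^r)_{iu}$. Hence the sum above is a function of the first multiset, and symmetrically $\sum_u M_{iu}(\vec{M}^r)_{uj}$ is a function of the second. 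Both sums equal $(\vec{M}^{r+1})_{ij}$, because $\vec{M}$ and $\vec{M}^r$ commute.

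\textbf{Main obstacle: the order-invariant combination $\oplus$.} The two multisets are merged by $\oplus$, so from $\vec{v}^\infty_{ij}$ we may only recover their unordered union, not each one separately. Since the tags are $(\text{color}, M\text{-value})$ pairs of the same type, a single element cannot be attributed to one side. I would handle this by computing the total over the union,
\begin{equation*}
\sum_{\text{union}} (\vec{M}^r)_{\cdot} \cdot M_{\cdot} = 2(\vec{M}^{r+1})_{ij}.
\end{equation*}
This total is a function of the merged multiset alone, so it is determined by the color, which gives the induction step. For this I take $\oplus$ to be injective on unordered pairs of multisets (e.g.\ multiset union), as the paper assumes. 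If $\oplus$ only determined the union, the sum argument above still works. It is also worth checking that the stable coloring is well defined, i.e.\ that stable colors determine next-round colors. This is standard: at stability the partition no longer refines.

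\textbf{Step 2: interpolation.} Since $\vec{M}$ is symmetric, hence diagonalizable, each Frobenius covariant is the Lagrange polynomial
\begin{equation*}
\vec{P}_k = \prod_{l \neq k} \frac{\vec{M} - \lambda_l \vec{I}}{\lambda_k - \lambda_l}.
\end{equation*}
This is a fixed linear combination of $\vec{M}^0, \dots, \vec{M}^{m-1}$ whose coefficients depend only on the spectrum. These coefficients are the same for the positions $(i,j)$ and $(p,q)$ within the one matrix, so they need not be determined by the colors. By Step 1, equal stable colors give equal entries of each $\vec{M}^r$, and linearity yields $(\vec{P}_k)_{ij} = (\vec{P}_k)_{pq}$ for all $k \in [m]$.
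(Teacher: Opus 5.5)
Your proof is correct and follows the paper's route: induct on $r$ to show the coloring refines every power $\vec{M}^r$, realizing $\vec{M}^r\vec{M}$ and $\vec{M}\vec{M}^r$ through the two sparse multisets, then apply Sylvester's formula to write each $\vec{P}_k$ as a polynomial in $\vec{M}$. Your handling of the order-invariant $\oplus$ (summing over the merged multiset to get $2(\vec{M}^{r+1})_{ij}$) and your induction directly on the stable coloring via stability are slightly tidier than the paper's version, which applies $\oplus$ to scalar sums and inducts on the iteration index, but the argument is the same.
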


\begin{proof}
The proof is based on the fact that the spectral projectors $\vec{P}_k$ are polynomials in $\vec{M}$ via Sylvester's formula. We say a coloring $\vec{v}$ refines a matrix $\vec{A}$, defined as:
\begin{equation*}
    \vec{v}_{ij} = \vec{v}_{pq} \implies A_{ij} = A_{pq}.
\end{equation*}
We proceed by induction to demonstrate that the multiset aggregation effectively simulates matrix multiplication: if the $\dmcwl$ coloring at an iteration refines the entries of $\vec{M}^d$, the next iteration refines $\vec{M}^{d+1}$. Consequently, the stable coloring refines any polynomial of $\vec{M}$, and by extension, naturally refines all the specific polynomials defining the spectral projectors.

At initialization $t=0$, the coloring of $\vec{v}^0$ refines $\vec{M}^1$ naturally, as they are injectively defined by $\vec{M}$. Besides, $\vec{v}^0$ also refines $\vec{M}^0 = \vec{I}$, because of the diagonal indicator $\mathbb{I}_{i=j}$. 

Assume that at iteration $t$, the coloring $\vec{v}^{t}$ refines the matrix power $\vec{M}^d$:
\begin{equation*}
\vec{v}^t_{ij} = \vec{v}^t_{pq} \implies \left( \vec{M}^d \right)_{ij} = \left( \vec{M}^d \right)_{pq}, \text{ for all } i,j,p,q.
\end{equation*}
We show that the next iteration $\vec{v}^{t+1}$ will refine $\vec{M}^{d+1}$:
\begin{equation*}
    \vec{v}^{t+1}_{ij} = \vec{v}^{t+1}_{pq} \implies \left( \vec{M}^{d+1} \right)_{ij} = \left( \vec{M}^{d+1} \right)_{pq}.
\end{equation*}
Specifically, we have the following derivation for $\dmcwl$:
\begin{equation*}
\begin{aligned}
& \vec{v}^{t+1}_{ij} = \vec{v}^{t+1}_{pq} \\
\implies &  \dgal[\big]{ (\vec{v}_{iu}^{t}, M_{uj}) \mid u \in [n], M_{uj} \neq 0} \oplus \dgal[\big]{ (\vec{v}_{uj}^{t}, M_{iu}) \mid u \in [n], M_{iu} \neq 0}  = \\
& \dgal[\big]{ (\vec{v}_{pu}^{t}, M_{uq}) \mid u \in [n], M_{uq} \neq 0} \oplus \dgal[\big]{ (\vec{v}_{uq}^{t}, M_{pu}) \mid u \in [n], M_{pu} \neq 0} && \text{ (i)} \\
\implies &  \dgal[\big]{ ((\vec{M}^d)_{iu}, M_{uj}) \mid u \in [n], M_{uj} \neq 0} \oplus \dgal[\big]{ ((\vec{M}^d)_{uj}, M_{iu}) \mid u \in [n], M_{iu} \neq 0}  = \\
& \dgal[\big]{ ((\vec{M}^d)_{pu}, M_{uq}) \mid u \in [n], M_{uq} \neq 0} \oplus \dgal[\big]{ ((\vec{M}^d)_{uq}, M_{pu}) \mid u \in [n], M_{pu} \neq 0} && \text{ (ii)} \\
\implies &  \dgal[\big]{ (\vec{M}^d)_{iu} M_{uj} \mid u \in [n], M_{uj} \neq 0} \oplus \dgal[\big]{ (\vec{M}^d)_{uj} M_{iu} \mid u \in [n], M_{iu} \neq 0}  = \\
& \dgal[\big]{ (\vec{M}^d)_{pu} M_{uq} \mid u \in [n], M_{uq} \neq 0} \oplus \dgal[\big]{ (\vec{M}^d)_{uq} M_{pu} \mid u \in [n], M_{pu} \neq 0} && \text{ (iii)} \\
\implies & \left(\sum_{ u \colon M_{uj} \neq 0} (\vec{M}^d)_{iu} M_{uj}\right) \oplus \left(\sum_{ u  \colon M_{iu} \neq 0} (\vec{M}^d)_{uj} M_{iu} \right) \\
= & \left(\sum_{ u \colon M_{uq} \neq 0} (\vec{M}^d)_{pu} M_{uq}\right) \oplus \left(\sum_{ u \colon M_{pu} \neq 0} (\vec{M}^d)_{uq} M_{pu}\right) && \text{ (iv)}\\
\implies & \left(\sum_{ u \colon M_{uj} \neq 0} (\vec{M}^d)_{iu} M_{uj} + \sum_{ u \colon M_{uj} = 0} (\vec{M}^d)_{iu} M_{uj}\right) \\
\oplus & \left(\sum_{ u  \colon M_{iu} \neq 0} (\vec{M}^d)_{uj} M_{iu} + \sum_{ u  \colon M_{iu} = 0} (\vec{M}^d)_{uj} M_{iu} \right) \\
= & \left(\sum_{ u \colon M_{uq} \neq 0} (\vec{M}^d)_{pu} M_{uq} + \sum_{ u \colon M_{uq} = 0} (\vec{M}^d)_{pu} M_{uq}\right) \\
\oplus & \left(\sum_{ u \colon M_{pu} \neq 0} (\vec{M}^d)_{uq} M_{pu} + \sum_{ u \colon M_{pu} = 0} (\vec{M}^d)_{uq} M_{pu}\right) && \text{ (v)}\\
\implies & \left(\sum_{ u } (\vec{M}^d)_{iu} M_{uj}\right) \oplus \left(\sum_{ u  } (\vec{M}^d)_{uj} M_{iu} \right) \\
= & \left(\sum_{ u } (\vec{M}^d)_{pu} M_{uq}\right) \oplus \left(\sum_{ u } (\vec{M}^d)_{uq} M_{pu}\right) \\
\implies &  (\vec{M}^{d} \vec{M})_{ij} \oplus (\vec{M} \vec{M}^{d})_{ij}  = (\vec{M}^{d} \vec{M})_{pq} \oplus (\vec{M} \vec{M}^{d})_{pq} \\
\implies &  (\vec{M}^{d+1})_{ij}  = (\vec{M}^{d+1})_{pq}. \\
\end{aligned}
\end{equation*}

Among the derivations, (i) holds because it is a partial definition; (i) to (ii) holds because $\vec{v}^t$ refines $\vec{M}^d$ by hypothesis; (ii) to (iii) holds because of injectivity, if the injective function of $\mathsf{hash}(a, b) = \mathsf{hash}(c, d)$, then $\mathsf{hash}(a \cdot b) = \mathsf{hash}(c \cdot d)$; (iii) to (iv) also holds because of injectivity; if two multisets of real values are equal, then the sum aggregation of their elements must also be equal; (iv) to (v) is straightforward because it is adding zeros to (iv). 

Therefore, $\vec{v}^{t+1}_{ij} = \vec{v}^{t+1}_{pq} \implies \left( \vec{M}^{d+1} \right)_{ij} = \left( \vec{M}^{d+1} \right)_{pq}$. 

By induction, the stable coloring $\vec{v}^\infty$ refines $\vec{M}^d$ for all $d \in \mathbb{N}$. Consequently, it also refines any matrix polynomial $p(\vec{M}) = \sum_{d=0}^\infty a_d \vec{M}^d$. This holds because the expressivity to distinguish the structural features of individual powers $\vec{M}^d$ implies the ability to distinguish their linear combinations.
The real-valued matrix $\vec{M}$ is symmetric and diagonalizable, therefore we apply spectral decomposition $\vec{M} = \sum_{k=1}^m \lambda_k \vec{P}_k$, where matrices $\vec{P}_k$ are given by the Sylvester's formula \citep[p.~437]{horn1994topics}: 
$$ \vec{P}_k = \prod_{j \neq k} \frac{\vec{M} - \lambda_j \vec{I}}{\lambda_k - \lambda_j}. $$
Because each $\vec{P}_k$ is a polynomial in $\vec{M}$, the stable coloring $\vec{v}^\infty$ refines each $\vec{P}_k$. Thus, it holds for all $k$ that $\vec{v}_{ij}^{\infty} = \vec{v}_{pq}^{\infty} \implies (\vec{P}_k)_{ij} = (\vec{P}_k)_{pq}$.
\end{proof}

Next, we adapt the proof of $\vctfwl$ to fit $\dmcwl$, using the PDHG algorithm as a proof technique. As SDPs can have multiple solutions, which complicates analysis, we also adopt the assumption of \citet{qian2026gnnsdp} that the SDP has a unique solution with the minimum Frobenius norm. 

Our work for Max-Cut relaxations is a special case of SDPs, as the $\Acal$ is a diagonal operation. We define the operator:
\begin{equation*}
\text{diag} \colon \mathbb{S}^n \to \mathbb{R}^n, \quad \text{diag}(\vec{X}) = (X_{11}, X_{22}, \dots, X_{nn})\trans
\end{equation*}
and its adjoint:
\begin{equation*}
\text{Diag} \colon \mathbb{R}^n \to \mathbb{S}^n, \quad \text{Diag}(\vec{y}) = \begin{bmatrix}
    y_1 & 0 & \cdots & \\
    0 & y_2 & \cdots & \\
    & & \ddots & 0 \\
    & \cdots & 0 & y_n \\ 
\end{bmatrix}.
\end{equation*}

The PDHG updates for the Frobenius-regularized SDP are given by \citet{qian2026gnnsdp}:
\begin{equation}
\label{eq:pdhg_proof}
\begin{aligned}
\vec{X}^{t+1} & \coloneq \Proj_{\mathbb{S}^n_+} \left[ \dfrac{\vec{X}^t - \alpha_t \text{Diag}(\vec{y}^t) - \alpha_t \vec{C}}{1 + \alpha_t \varepsilon} \right] \\
\vec{y}^{t+1} & \coloneq \vec{y}^t + \beta_t \text{diag} \left( \vec{X}^{t+1} + \theta_t \left( \vec{X}^{t+1} - \vec{X}^t \right) \right) - \beta_t \vec{b}.
\end{aligned}
\end{equation}
The $\Proj$ function projects a symmetric matrix $ \vec{X} \in \mathbb{S}^n $ onto the PSD cone by finding the closest PSD matrix $ \vec{X}_{{\mathbb{S}}^n_+} $ that minimizes the Frobenius norm distance $ \lVert \vec{X}_{{\mathbb{S}}^n_+} - \vec{X} \rVert_F $. Given the spectral decomposition of the input matrix $ \vec{X} = \sum_{i=1}^n \lambda_i \vec{v}_i \vec{v}_i^\top $, where $ \lambda_i $ are the real eigenvalues and $ \vec{v}_i $ are the corresponding orthonormal eigenvectors. The projected matrix is then reconstructed by truncating any negative eigenvalues to zero: $ \vec{X}_{{\mathbb{S}}^n_+} = \sum_{i=1}^n \max(\lambda_i,0) \cdot \vec{v}_i \vec{v}_i^\top $.

For any $\varepsilon > 0$, the SDP problem has a unique primal solution $\vec{X}^*$. And if $\varepsilon \rightarrow 0$, it reduces to normal linear SDP. We initialize with $\vec{X}^0 = \bm{0}_{n \times n}$ and $\vec{y}^0 = \bm{0}_m$.

Now we proceed with our proof.

\begin{proof}[Proof of \cref{thm:sparsewl}]
We provide a constructive proof by simulating the primal-dual hybrid gradient (PDHG) algorithm \citep{chambolle2016ergodic,wang2024tuningfree}. We show that the stable coloring of $\dmcwl$ refines every update of the PDHG algorithm, which is proven to converge in \citet{qian2026gnnsdp,wang2024tuningfree}. 

Our goal is to show that for all $t \ge 0$, the stable $\dmcwl$ coloring on variable and constraint nodes refines the intermediate primal-dual solutions of PDHG updates:
\begin{align*}
\vec{v}^{\infty}_{ij} = \vec{v}^{\infty}_{pq} & \implies X^t_{ij} = X^t_{pq} \\
\vec{v}^{\infty}_{kk} = \vec{v}^{\infty}_{ll} & \implies y^t_{k} = y^t_{l}.
\end{align*}

Before the inductive step, we establish three key facts derived from the properties of the stable coloring.

\xhdr{Fact 1} Since the stable coloring $\vec{v}^\infty$ refines the initial coloring $\vec{v}^0$, and $\vec{v}^0$ is defined by $\vec{C}$, it holds that $\vec{v}^{\infty}_{ij} = \vec{v}^{\infty}_{pq} \implies C_{ij} = C_{pq}$. Furthermore, by applying \Cref{lem:multiset_spectral} to $\vec{C}$, if its spectral decomposition is given by $\vec{C} = \sum_h \lambda_h \vec{P}_h$, then the projection matrices also respect the coloring: $\forall h \colon \vec{v}^{\infty}_{ij} = \vec{v}^{\infty}_{pq} \implies (\vec{P}_h)_{ij} = (\vec{P}_h)_{pq}$. Besides, we notice that the constraints are uniform for Max-Cut, i.e., $\vec{b} = \vec{e}$, therefore, it does not contribute to separation power. 

\xhdr{Fact 2} We notice that the \textbf{diagonal} and \textbf{off-diagonal} elements receive a different set of colors at the initialization, due to the diagonal indicator $\mathbb{I}$, and their color sets remain disjoint throughout the update of $\dmcwl$.

\xhdr{Proof by induction}
We now prove the theorem by induction on $t$.
At initialization $t=0$, we have $\vec{X}^0 = \bm{0}$ and $\vec{y}^0 = \bm{0}$. The implications $\vec{v}^{\infty}_{ij} = \vec{v}^{\infty}_{pq} \implies X^0_{ij} = X^0_{pq}$ and $\vec{v}_{kk}^{\infty} = \vec{v}_{ll}^{\infty} \implies y_k^0 = y_l^0$ hold trivially.

Suppose the hypothesis holds for iteration $t \geq 0$. We show it holds for iteration $t+1$. 

\textbf{1. For diagonal ones}

During the primal update
\begin{equation*}
\vec{X}^{t+1} \coloneq \Proj_{\mathbb{S}^n_+} \left[ \dfrac{\vec{X}^t - \alpha_t \text{Diag}(\vec{y}^t) - \alpha_t \vec{C}}{1 + \alpha_t \varepsilon} \right],
\end{equation*}
consider the term inside the PSD projection in:
\begin{equation*}
\vec{Z}^t \coloneq  \dfrac{\vec{X}^t - \alpha_t \text{Diag}(\vec{y}^t) - \alpha_t \vec{C}}{1 + \alpha_t \varepsilon},
\end{equation*}
since $\vec{X}^t$ and $\text{Diag}(\vec{y}^t)$ (by hypothesis), $\vec{C}$ (by Fact 1), are all refined by the coloring $\vec{v}^\infty$, their linear combination $\vec{Z}^t$ also is. 
With Fact 2, one can easily verify this by inspecting the diagonal and off-diagonal elements separately. For off-diagonals $i \neq j$ and $p \neq q$, if $\vec{v}^\infty_{ij} = \vec{v}^\infty_{pq}$, then $\left(\vec{X}^t - \alpha_t \vec{C}\right)_{ij} = \left(\vec{X}^t - \alpha_t \vec{C}\right)_{pq}$. And for diagonals, if $\vec{v}^\infty_{kk} = \vec{v}^\infty_{ll}$, then $\left(\vec{X}^t - \alpha_t \text{Diag}(\vec{y}^t) - \alpha_t \vec{C} \right)_{kk} = \left(\vec{X}^t - \alpha_t \text{Diag}(\vec{y}^t) - \alpha_t \vec{C} \right)_{ll}$. 

Then, the update is $\vec{X}^{t+1} = \Proj_{\mathbb{S}^n_+} (\vec{Z}^t)$. The PSD projection depends only on the spectral decomposition. Specifically, spectral decomposition is performed on $\vec{Z}^t$, and the Frobenius covariants corresponding to negative eigenvalues are removed. By \Cref{lem:multiset_spectral}, we know $\forall (i,j,p,q) \colon \vec{v}^{\infty}_{ij} = \vec{v}^{\infty}_{pq} \implies X^{t+1}_{ij} = X^{t+1}_{pq}$, that is, $\vec{v}^{\infty}$ refines $\vec{X}^{t+1}$. 

Finally, in the dual update step, we evaluate the term inside the operator $\text{diag}$:
\begin{equation*}
\vec{W}^{t+1} \coloneq \vec{X}^{t+1} + \theta_t \left( \vec{X}^{t+1} - \vec{X}^t \right).
\end{equation*}
Since both $\vec{X}^{t+1}$ and $\vec{X}^t$ are refined by $\vec{v}^\infty$, so is $\vec{W}^{t+1}$.
Therefore, $\text{diag}(\vec{W}^{t+1})$ is naturally refined by $\left(\vec{v}^\infty_{11}, \cdots, \vec{v}^\infty_{nn} \right)\trans$. 
Since $\vec{y}^t$ (by hypothesis) and $\vec{b}$ (by Fact 1) are both refined by the coloring $\left(\vec{v}^\infty_{11}, \cdots, \vec{v}^\infty_{nn} \right)\trans$, the sum $\vec{y}^{t+1}$ is refined by the coloring.

In summary, we have proven by induction that if the stable coloring $\vec{v}^\infty$ refines $\vec{X}^{t}$ and $\vec{y}^t$ in the PDHG algorithm, they also refine $\vec{X}^{t+1}$ and $\vec{y}^{t+1}$. 

Since PDHG is guaranteed to converge to $\vec{X}^*$ and $\vec{y}^*$, this completes the proof.
\end{proof}

\subsection{Proof of Neuralization}
\label{sec:proof_neural}
\begin{proposition}
There exists a set of parameters for the functions $\mathsf{INIT}$, $\mathsf{UPD}$, $\mathsf{MSG}$, and $\mathsf{MAP}$ such that the neural architectures $\mcmpnn$ and $\dmpnn$ possess maximal expressivity equivalent to their corresponding algorithms.
\end{proposition}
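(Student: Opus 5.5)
We prove the proposition in the standard way for GIN-style neuralizations (cf.\ \citet{xu2018how,Mor+2019} and the neuralization in \citet{qian2026gnnsdp}). We fix a graph size bound $n$ and work with a finite set of instances. Then at every iteration only finitely many colors, and hence finitely many real input values $C_{ij}$, occur. We show by induction on $t$ that there are parameters such that, for every pair of indices (and across instances), $\vec{h}^t_{ij}=\vec{h}^t_{pq}$ if and only if $\vec{v}^t_{ij}=\vec{v}^t_{pq}$. Here $\vec{v}^t$ is the coloring of $\mcwl$ (respectively $\dmcwl$).

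The direction ``$\mcwl$ colors equal $\Rightarrow$ features equal'' is immediate, since every neural update is a function of exactly the data that the hash receives. It remains to achieve injectivity.

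\textbf{Base case.} The map $(C_{ij},\mathbb{I}_{i=j})$ takes finitely many values, so an MLP $\mathsf{INIT}$ can send distinct pairs to distinct vectors in $\mathbb{R}^d$. For example, embed the pair as $(C_{ij},\mathbb{I}_{i=j})$ itself; a linear layer suffices.

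\textbf{Inductive step for $\mcmpnn$.} Let $\mathcal{F}^{t-1}$ be the finite set of feature values at layer $t-1$.
\begin{enumerate}
  \item Choose $\mathsf{MAP}^t$ to send each element of $\mathcal{F}^{t-1}$ to a distinct power $N^{k}$, where $N$ exceeds $2n$. Sums of two such values then determine the unordered pair $\{\vec{h}_{uj},\vec{h}_{iu}\}$. This realizes the injective, order-invariant $\oplus$, and it is exactly why symmetry $\vec{h}_{ij}=\vec{h}_{ji}$ is preserved.
  \item Choose $\mathsf{MSG}^t$ to map the finitely many possible pair-sums to distinct powers $N'^{k}$ with $N'>n$. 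The outer sum over $u\in[n]$ is then injective on multisets of size $n$; this is the standard GIN lemma.
  \item Choose $\mathsf{UPD}^t$ to be injective on the finite set of pairs (old feature, aggregated message), e.g.\ by concatenation.
\end{enumerate}
Each of these maps is a function on a finite subset of $\mathbb{R}^d$. By universal approximation, MLPs can realize them up to arbitrarily small error, and a sufficiently small error preserves distinctness of finitely many points. Alternatively, exact interpolation by a ReLU MLP is possible on finite sets.

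\textbf{Inductive step for $\dmpnn$.} This case needs more care because the pair $(\vec{h},C)$ is fed directly to $\mathsf{MSG}$, and the two multisets are summed rather than combined by a separate $\oplus$. We choose $\mathsf{MSG}^t$ to map each of the finitely many tuples $(\vec{h}^{t-1}, C)$, with $C\neq 0$, to a distinct power of $N>2n$. Then the total sum determines the union multiset
\[
\dgal{(\vec{h}_{uj},C_{iu})} \cup \dgal{(\vec{h}_{iu},C_{uj})}.
\]
The union is precisely an order-invariant injective combination $\oplus$ of the two multisets. Indeed, $\dmcwl$ allows any injective order-invariant $\oplus$, and merging into one multiset is one such choice, since the algorithm's separation power only requires that swapping $i,j$ gives the same value. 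With this choice, $\mathsf{UPD}^t$ is handled as before.

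\textbf{Main obstacle.} The main obstacle is this last point: checking that merging after $\mathsf{MSG}$ loses no information needed by the proof of \cref{thm:sparsewl}. I would verify it directly, using the argument of \cref{lem:multiset_spectral}. That argument only uses the sum $(\vec{M}^d\vec{M})_{ij}+(\vec{M}\vec{M}^d)_{ij}=2(\vec{M}^{d+1})_{ij}$, and this sum is recoverable from the merged multiset of (value, coefficient) pairs. Hence the theorem holds verbatim for the merged variant, and the neural architecture matches the algorithm's expressivity.
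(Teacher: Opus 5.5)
Your overall strategy is the paper's. Each neural update is a function of exactly the data the hash receives, so the network is never finer than the algorithm. Conversely, sum-decomposed MLP encodings can be chosen injective, so it is never coarser. Restricting to a finite set of instances, with explicit base-$N$ encodings realized by exact ReLU interpolation, is the standard rigorous form of the paper's appeal to universal approximation of injective multiset functions. The $\mcmpnn$ part is correct as written.

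The step that fails is the claim that merging into one multiset is an \emph{injective} order-invariant $\oplus$. Multiset union is not injective on unordered pairs of multisets: for distinct $x,y,z$, $\dgal{x}\uplus\dgal{y,z}=\dgal{x,y}\uplus\dgal{z}$, although $\{\dgal{x},\dgal{y,z}\}\neq\{\dgal{x,y},\dgal{z}\}$. Since $\dmpnn$ uses the same $\mathsf{MSG}^t$ in both sums and adds the results, a layer never sees more than $\vec{h}^{t-1}_{ij}$ and the union. So neither your parameter choice nor any other gives, at the level of a single layer, the direction ``neural at least as fine as the algorithm'' for a $\dmcwl$ whose $\oplus$ is genuinely injective on pairs of multisets.

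What your construction does prove is that $\dmpnn$ is equivalent to the merged variant of $\dmcwl$, with $\oplus$ taken to be multiset union. That is how the paper's allowance of ``element-wise summation'' for $\oplus$ has to be read. Your ``main obstacle'' paragraph is the right repair, and it is correct. The proof of \cref{lem:multiset_spectral} only uses $\sum_u (\vec{M}^d)_{iu}M_{uj}+\sum_u M_{iu}(\vec{M}^d)_{uj}=2(\vec{M}^{d+1})_{ij}$. Once the colors refine $\vec{M}^d$, this quantity is a function of the merged multiset. Facts 1 and 2 also survive, because the previous color stays inside the hash. Hence the paper's proof of \cref{thm:sparsewl} transfers unchanged to the merged variant.

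So drop the injectivity claim, and state the $\dmpnn$ result as equivalence with the merged algorithm together with this transfer. The paper's own proof glosses over the same point: it says $\mathsf{UPD}^t$ concatenates the two multiset sums, which is not the architecture as defined. After this correction, your treatment is the more accurate one.
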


The proof follows the universal approximation arguments for multiset functions established in \citet{qian2026gnnsdp}.

\begin{proof}
To prove that the neural architectures achieve the exact expressive power of their corresponding $\mathsf{WL}$ algorithms, we rely on the universal approximation of injective multiset functions. Following \citet{qian2026gnnsdp} and \citet{Mar+2019}, let $\mathcal{X}$ be a space of bounded multisets. \emph{There exists a continuous, injective function parameterized by an MLP that maps any bounded multiset to a unique real-valued vector representation.}

Since our architectures are deterministic unrollings of the $\mcwl$ and $\dmcwl$ algorithms, they cannot distinguish variables that the WL tests cannot. Thus, we only need to show that the neural networks are at most as expressive as the $\mathsf{WL}$ injective hash functions, meaning that the neural network layers can injectively simulate them.

$\mcmpnn$ can be derived from $\vctfwl$ \citep{qian2026gnnsdp}, the aggregation simply replaces the standard bipartite, variable-to-constraint message passing with variable-to-variable topology. The sum over $u \in [n]$ combined with the MLPs $\mathsf{MSG}$ and $\mathsf{MAP}$ injectively encodes the neighborhood multisets, identical to the variable-node update proof in \citet{qian2026gnnsdp}. 

For the sparse variant $\dmpnn$, the algorithm aggregates over conditionally filtered multisets based on the sparse topology of $\vec{C}$. The neural formulation handles this by explicitly masking the messages:
$$ \sum_{\substack{u \in [n] \\ C_{iu} \neq 0}} \mathsf{MSG}^{t} \bigl( \vec{h}_{uj}^{t-1}, C_{iu} \bigr) $$
Because the matrix entries $C_{iu}$ belong to a compact space, the $\mathsf{MSG}$ function can be parameterized to injectively map the pair $(\vec{h}_{uj}^{t-1}, C_{iu})$ to a latent vector. The summation over the non-zero indices perfectly implements the multiset encoding of the sparse neighbors. Finally, the update function $\mathsf{UPD}^t$ concatenates the outputs of these independent multiset summations and injectively maps them to the next layer's feature space. 

Therefore, there exists a set of weights for which the neural architectures perfectly simulate the injective hash functions of their $\mathsf{WL}$ algorithms, establishing equivalent expressive power.
\end{proof}

\section{Details of experiment design}
\label{sec:exp_design}
\paragraph{Comparison of GNN variants on uniform dataset} 
All GNN variants are set to 10 layers with a hidden dimension of 128. The models are trained using the Adam optimizer \citep{Kin+2015} with default hyperparameters and batch size of 128. Experiments were conducted on a compute cluster equipped with four NVIDIA L40S GPUs. We train for a maximum of $\num{1000}$ epochs, using a scheduler that decays the learning rate if the averaged dual objective on the validation set fails to improve for 100 consecutive epochs. To prevent overfitting, early stopping is triggered after $\num{200}$ epochs without validation improvement. We report the results averaged over 3 random initialization seeds.

Despite the sparsity of the underlying graphs, we implement $\dmpnn$ using dense matrix multiplications, as modern GPUs are heavily optimized for dense operations, and we find they process them significantly faster than batched sparse-matrix alternatives.

\paragraph{B\&B experiments on Max-Cut dataset}
The GNN used in B\&B experiments are set to 6 layers with a hidden dimension of 96. With a batch size of $\num{32}$, We first train the GNN with dual heads for $\num{1000}$ epochs under the same protocol, then freeze these parameters and train the primal heads for another $\num{200}$ epochs with patience $\num{100}$.
In the~\cref{sec:results}, each benchmark instance family contains ten different instances. We solve all instances to global optimality without a timeout and report the average number of evaluated nodes and solving time.

\section{Additional experiments}
\label{sec:add_exps}

\subsection{Ablation: Choice of SDP solver}\label{sec:alationSDPsolver}
To justify our selection of Mosek \citep{mosek} as our primary classical solver, we conduct an empirical performance comparison across the Max-Cut instances from \citet{biqmac,biqbin}. Specifically, we evaluate the computational time required to solve the SDP relaxation at the root node using three distinct optimization paradigms: SDPLR, a non-linear approach based on the Burer-Monteiro method \citep{burer2003nonlinear}; Mosek \citep{mosek}, a state-of-the-art interior-point method; and SCS \citep{o2021operator}, a first-order solver utilizing the ADMM algorithm \citep{wen2010alternating}. The experiments are conducted on the AMD EPYC 7662 CPU. The solving times are summarized in \cref{tab:solver_results}.

\begin{table*}[htb!]
\caption{Comparison of mean solving times (in seconds) for the root-node SDP relaxation across various Max-Cut instance families.}
\label{tab:solver_results}
\centering
\resizebox{0.5\textwidth}{!}{
\begin{tabular}{lccc}
\toprule
Instance Family & Mosek & SCS & SDPLR \\
\midrule
g05\_60 & 0.0146 & 0.1040 & 0.1423 \\
g05\_80 & 0.0235 & 0.2135 & 0.2023 \\
g05\_100 & 0.0380 & 0.4145 & 0.3018 \\
g05\_180 & 0.1208 & 2.0893 & 1.0229 \\
pm1s\_80 & 0.0221 & 0.1760 & 0.1521 \\
pm1s\_100 & 0.0347 & 0.3192 & 0.1792 \\
pm1s\_180 & 0.1135 & 1.7188 & 0.4433 \\
pw01\_100 & 0.0376 & 0.7706 & 0.1894 \\
pw01\_180 & 0.1158 & 3.7741 & 0.5261 \\
pw05\_100 & 0.0399 & 0.9922 & 0.3982 \\
pw05\_180 & 0.1268 & 7.0904 & 1.4823 \\
pw09\_100 & 0.0408 & 1.1123 & 0.3519 \\
pw09\_180 & 0.1307 & 6.4214 & 1.1196 \\
w01\_100 & 0.0373 & 0.6187 & 0.1872 \\
w01\_180 & 0.1179 & 9.5305 & 0.5013 \\
w05\_100 & 0.0384 & 1.0688 & 0.3590 \\
w05\_180 & 0.1230 & 4.9640 & 1.3138 \\
w09\_100 & 0.0393 & 0.9056 & 0.3445 \\
w09\_180 & 0.1265 & 4.7360 & 1.0651 \\
\bottomrule
\end{tabular}
}
\end{table*}

As demonstrated in \cref{tab:solver_results}, Mosek consistently outperforms both the first-order and Burer-Monteiro methods by a substantial margin across all evaluated instance sizes and densities. This superior computational efficiency strongly motivates our exclusive use of Mosek for exact SDP evaluations throughout our branch-and-bound framework. 

\subsection{Ablation: Number of layers}

We identify a key structural limitation of $\dmpnn$ at shallow depths. As shown in \cref{tab:layers}, $\dmpnn$ yields a significantly higher objective gap than $\mcmpnn$ when utilizing only 4 layers. Although this gap becomes marginal in deeper networks, $\mcmpnn$ consistently performs better. This difference intuitively stems from how each architecture expands its structural receptive field. Because $\mcmpnn$ jointly aggregates pairs of previous embeddings, it simulates matrix multiplication. At layer $t$, it effectively refines $\vec{C}^{2^t}$. Conversely, $\dmpnn$ aggregates features with the static matrix $\vec{C}$, restricting its refinement to $\vec{C}^{t+1}$ at layer $t$. Therefore, while $\dmpnn$ remains theoretically valuable, the rapid field expansion of $\mcmpnn$ makes it more favorable when network depth is limited.

\begin{table*}[htb!]
\caption{Relative objective gap (\%) under varying network depths on the $N=100, p=0.1$ ER graph dataset.}
\label{tab:layers}
\centering
\resizebox{0.6\textwidth}{!}{
\begin{tabular}{lccc}
\toprule
\multirow{2}{*}{Model} & \multicolumn{3}{c}{\textbf{\# Layers}} \\
 & 4 & 6 & 10 \\
\midrule
$\mcmpnn$ 
& \textbf{0.931$\pm${\scriptsize 0.001}} 
& \textbf{0.656$\pm${\scriptsize 0.002}} 
& \textbf{0.598$\pm${\scriptsize 0.005}} \\
$\dmpnn$  
& 1.403$\pm${\scriptsize 0.004} 
& 0.872$\pm${\scriptsize 0.002} 
& 0.628$\pm${\scriptsize 0.004} \\ 
\bottomrule
\end{tabular}
}
\end{table*}

\subsection{Ablation: OOD size generalization}

To evaluate the out-of-distribution (OOD) size generalization of our architectures, we test models trained on smaller ER graph instances with $N=100$ against progressively larger graphs ($N=150$ and $N=200$) while maintaining the edge probability $p=0.1$. As detailed in \cref{tab:sizegen}, both $\mcmpnn$ and $\dmpnn$ exhibit a steady increase in the relative objective gap as the graph size scales up, highlighting the inherent difficulty of zero-shot size extrapolation. While both architectures degrade on larger instances, $\mcmpnn$ consistently shows a lower objective gap across all tested sizes, confirming that it is better equipped to generalize to unseen, larger-scale topologies.

\begin{table*}[htb!]
\caption{Size generalization performance (relative objective gap \%) on larger graphs. Models were evaluated on $N \in \{150, 200\}$ with fixed edge probability $p=0.1$.}
\label{tab:sizegen}
\centering
\resizebox{0.5\textwidth}{!}{
\begin{tabular}{lcc}
\toprule
{Dataset} & $\mcmpnn$ & $\dmpnn$ \\
\midrule
N=100, p=0.1
& 0.598$\pm${\scriptsize 0.005} 
& 0.628$\pm${\scriptsize 0.004} \\
N=150, p=0.1
& 2.545$\pm${\scriptsize 0.476} 
& 3.201$\pm${\scriptsize 0.547} \\
N=200, p=0.1
& 4.089$\pm${\scriptsize 0.453} 
& 4.712$\pm${\scriptsize 0.466} \\
\bottomrule
\end{tabular}
}
\end{table*}

\subsection{Ablation: Variants of normalization and GNNs}
\label{sec:ab_norm}

To further understand the architectural sensitivities of our model, we conducted an ablation study evaluating the impact of different normalization strategies. 

As mentioned in the \citet{qian2026gnnsdp}, they add normalization layers between GNN updates to stabilize training. For a mini-batch of $B$ graphs, variable embeddings are zero-padded to form a uniform tensor of shape $B \times n_{\max} \times n_{\max} \times d$, where $n_{\max}$ is the maximum graph size in the batch and $d$ is the feature dimension. Because the update MLPs contain biases, padded entries can distort normalization statistics. To investigate this, we formalize and ablate three normalization strategies:

The \textsf{SpatialNorm} layer normalizes each feature channel $f \in [d]$ across the entire padded spatial grid. The mean $\mu_f$ and variance $\sigma_f^2$ are computed as:
\begin{equation}
\mu_f = \frac{1}{n_{max}^2} \sum_{i=1}^{n_{max}} \sum_{j=1}^{n_{max}} h_{ij, f},  \quad \sigma_f^2 = \frac{1}{n_{max}^2} \sum_{i=1}^{n_{max}} \sum_{j=1}^{n_{max}} (h_{ij, f} - \mu_f)^2.
\end{equation}
Features are then standardized and shifted via learnable parameters $\gamma_f, \beta_f \in \mathbb{R}$:
\begin{equation}
\hat{h}_{ij, f} = \gamma_f \left( \frac{h_{ij, f} - \mu_f}{\sqrt{\sigma_f^2 + \epsilon}} \right) + \beta_f,
\end{equation}
where $\epsilon$ is a numerical stability constant. Crucially, because statistics are aggregated over the fixed $n_{\max} \times n_{\max}$ grid, the presence of zero-padding for graphs where the true size $n < n_{\max}$ inherently skews the normalization. 

As a baseline, we use \textsf{LayerNorm} \citep{ba2016layer}, which operates independently on each variable node $\vec{h}_{ij} \in \mathbb{R}^d$. It computes the mean $\mu_{ij}$ and variance $\sigma_{ij}^2$ exclusively across the feature dimension $d$:
\begin{equation}
\mu_{ij} = \frac{1}{d} \sum_{f=1}^{d} h_{ij, f}, \quad \sigma_{ij}^2 = \frac{1}{d} \sum_{f=1}^{d} (h_{ij, f} - \mu_{ij})^2.
\end{equation}
The vector is then scaled using learnable parameters $\vec{\gamma}, \vec{\beta} \in \mathbb{R}^d$:
\begin{equation}
\hat{\vec{h}}_{ij} = \vec{\gamma} \odot \left( \frac{\vec{h}_{ij} - \mu_{ij}\mathbf{1}}{\sqrt{\sigma_{ij}^2 + \epsilon}} \right) + \vec{\beta},
\end{equation}
Since it is performed per-node, \textsf{LayerNorm} is topology-agnostic. Consequently, it is completely immune to the zero-padding artifacts that degrade \textsf{SpatialNorm}.

We also evaluate \textsf{GraphNorm} \citep{cai2021graphnorm}. Unlike \textsf{SpatialNorm}, \textsf{GraphNorm} dynamically flattens the tensor to strictly isolate valid (unpadded) nodes. Let $\mathcal{V}_g = \{ (i,j) \mid i, j \in [n]\}$ denote the actual node indices for a graph $g$ of size $n$. \textsf{GraphNorm} computes statistics for each channel $f$ exclusively over this valid set:
\begin{equation}
\mu_{g, f} = \frac{1}{|\mathcal{V}_g|} \sum_{(i,j) \in \mathcal{V}_g} h_{ij, f}, \quad \sigma_{g, f}^2 = \frac{1}{|\mathcal{V}_g|} \sum_{(i,j) \in \mathcal{V}_g} (h_{ij, f} - \mu_{g, f})^2.
\end{equation}
Valid features are then normalized using a learnable mean-scaling parameter $\alpha_f$ alongside $\gamma_f, \beta_f \in \mathbb{R}$:
\begin{equation}
\hat{h}_{ij, f} = \gamma_f \left( \frac{h_{ij, f} - \alpha_f \mu_{g, f}}{\sqrt{\sigma_{g, f}^2 + \epsilon}} \right) + \beta_f.
\end{equation}
By explicitly masking padded entries, \textsf{GraphNorm} calculates the true statistics of each graph, completely isolating it from padding-induced distortions.

We trained the network on the Branch-and-Bound dataset with root nodes of ER graphs ($N=100, p=0.5$) across various graph sizes using \textsf{LayerNorm}, \textsf{GraphNorm}, \textsf{SpatialNorm}, and an unnormalized baseline (\textsf{NoNorm}), analyzing both the absolute and relative dual objective gaps on the test set. As illustrated in \cref{fig:norms}, the techniques exhibit two fundamentally different scaling behaviors. \textsf{SpatialNorm} yields highly suboptimal performance on small graphs (e.g., fewer than 50 nodes) because the excessive zero-padding heavily distorts its spatial statistics. Consequently, it shows a distinct downward trend in error as the node count increases and the padding ratio drops. In contrast, \textsf{LayerNorm}, \textsf{GraphNorm}, and \textsf{NoNorm} display the exact opposite trajectory. All three achieve exceptionally low objective gaps on small graphs but exhibit a steady, monotonic increase in error as the topology expands. Interestingly, the surprisingly competitive performance of \textsf{NoNorm} demonstrates that our proposed architecture is inherently well-conditioned, effectively learning the underlying SDP mapping without strictly requiring intermediate normalization to stabilize training. Ultimately, despite these divergent initial trajectories, all configurations tightly converge to a similar performance baseline as the graph size approaches 100 nodes and padding vanishes.

\begin{figure}
    \centering
    \includegraphics[width=\textwidth]{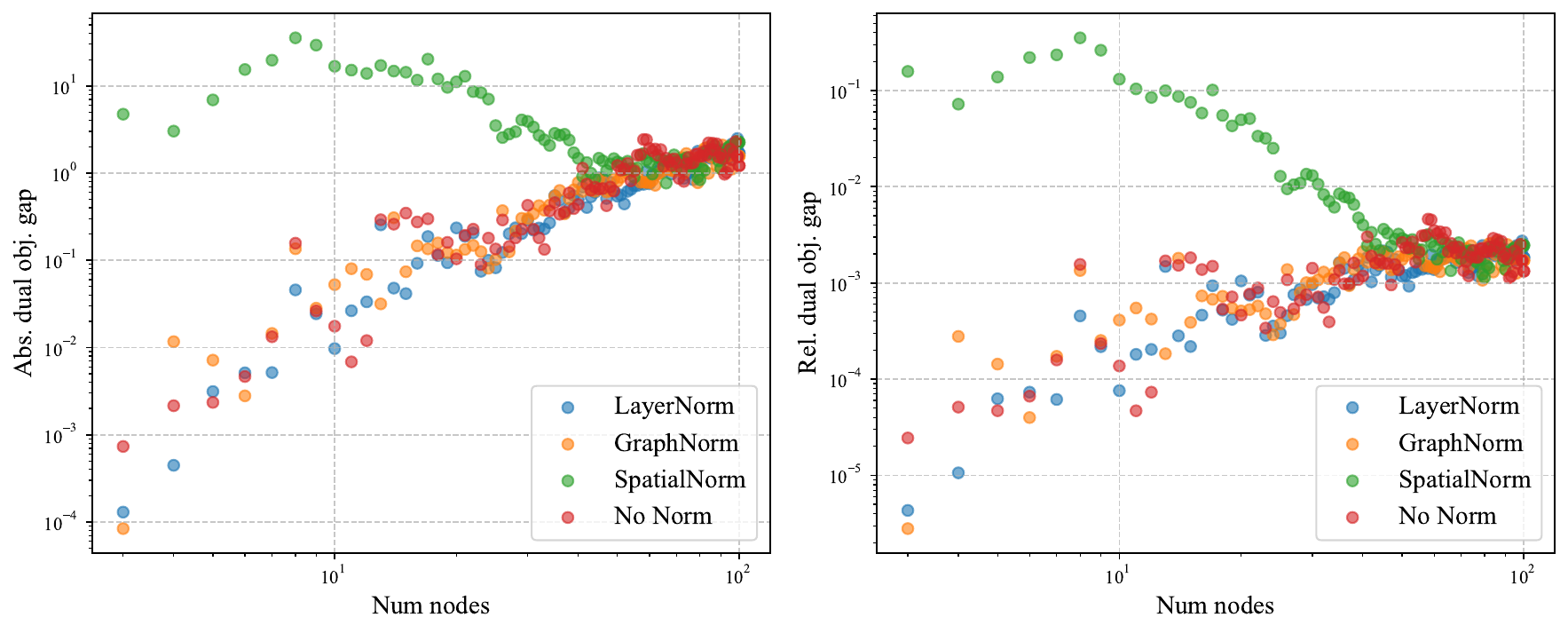} 
    \caption{Effect of normalization layers on objective gaps. The absolute (left) and relative (right) dual objective gaps across different numbers of nodes in the test set are shown in the plots. }
    \label{fig:norms}
\end{figure}

To complement our theoretical analysis, we empirically contrast the $\mcmpnn$ and $\dmpnn$ variants across different normalization layers. After training on the Branch-and-Bound dataset, we assess their generalization on test instances of varying sizes. As shown in \cref{fig:sparse_norms}, the scaling trend remains highly consistent between the two architectures. However, $\dmpnn$ demonstrates comparatively weaker convergence on larger graphs, resulting in worse performance than its $\mcmpnn$ counterpart, which is consistent with our ablation of the number of layers. 

\begin{figure}
    \centering
    \includegraphics[width=\textwidth]{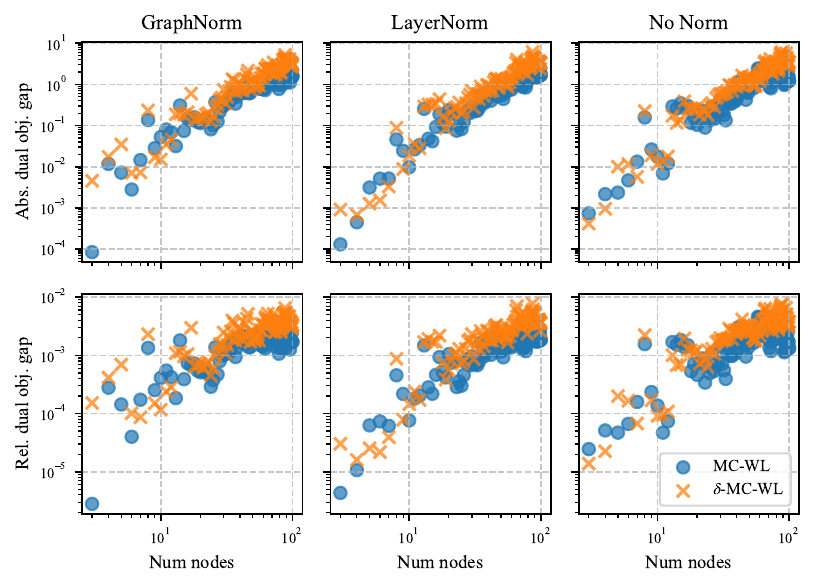} 
    \caption{Performance of $\mcmpnn$ and $\dmpnn$ under different normalizations. Absolute (top) and relative (bottom) dual objective gaps evaluated on the test set.}
    \label{fig:sparse_norms}
\end{figure}

We further tested the network with \textsf{LayerNorm}, \textsf{GraphNorm}, and \textsf{NoNorm} , as well as $\mcmpnn$ and $\dmpnn$ in established Max-Cut datasets \citep{biqmac,biqbin}. Instance families cover a diverse range of graph sizes, densities, and edge weights: (1) \texttt{g05\_n} (edge probability 0.5) for $n \in \{60,80,100,180\}$ with edge weights in $\{0, 1\}$; (2) \texttt{pm1s\_n} (edge probability 0.1) for $n \in \{80,100,180\}$ with edge weights in $\{-1, 0, 1\}$; (3) \texttt{w0d\_n} for  $n \in \{100,180\}$ and $d \in \{1,5,9\}$ (edge probability 0.1, 0.5, 0.9) with integer edge weights in $[-10, 10]$; (4) \texttt{pw0d\_n} for  $n \in \{100,180\}$ and $d \in \{1,5,9\}$ (edge probability 0.1, 0.5, 0.9) with integer edge weights in $[0, 10]$. Notably, among these families, only \texttt{g05\_100} aligns with the training distribution. The proposed GNN is trained on graphs with $n=100$ nodes, edge probability 0.5, and edge weights in $\{0, 1\}$, together with contracted subgraphs generated along random branching trajectories. As a result of contraction, these smaller subgraphs are typically denser than $0.5$ and no longer uniformly weighted in $\{0, 1\}$. 

We evaluated the out-of-distribution generalization of the proposed GNN by measuring the objective gap of the SDP relaxation of these instances in~\cref{fig:obj_gap_model}. \textsf{LayerNorm} and \textsf{NoNorm} consistently achieves better objective gap on in-training-distribution instances (\texttt{g05\_100}) and near-training-distribution instances (\texttt{g05\_60}, \texttt{g05\_80}). Similarly, the dense $\mcwl$ variant outperforms the $\dmcwl$ variant on these same families. However, \textsf{LayerNorm} exhibits significantly worse generalization than \textsf{GraphNorm} in out-of-distribution instances (\texttt{pm1s\_n}, \texttt{pw0d\_n}, \texttt{w0d\_n}). In particular, \textsf{LayerNorm} does not generalize when only the graph size differs (\texttt{g05\_180}), while all other configurations can result in reasonably small objective gaps. However, if the GNN with \textsf{LayerNorm} and $\mcwl$ is trained within a distribution where graphs are sparse and edge weights are sampled from a broader range, it still generalizes well to larger graphs as shown in~\cref{fig:obj_gap_dist}. Under \textsf{GraphNorm}, the generalization of $\mcwl$ variant is slightly better when the edge-weight distribution is relatively narrow (\texttt{g05}, \texttt{pm1s}, \texttt{pwd}), and slightly worse when the edge-weight distribution is wider (\texttt{wd}). Across all configurations, the smallest objective gaps are typically achieved in instance families with edge probability 0.5 (\texttt{g05\_n}, \texttt{pw05\_n}, \texttt{w05\_n}), followed by those with edge probability 0.9 (\texttt{pw09\_n}, \texttt{w09\_n}). In contrast, sparse instances with edge probability 0.1 (\texttt{pm1s\_n}, \texttt{pw01\_n}, \texttt{w01\_n}) have the worst objective gaps. This trend aligns with the training distribution, which consists of 100-node $50\%$ density graphs and contracted subgraphs that are usually denser than $50\%$. 

\begin{figure}
    \centering
    \includegraphics[width=\textwidth]{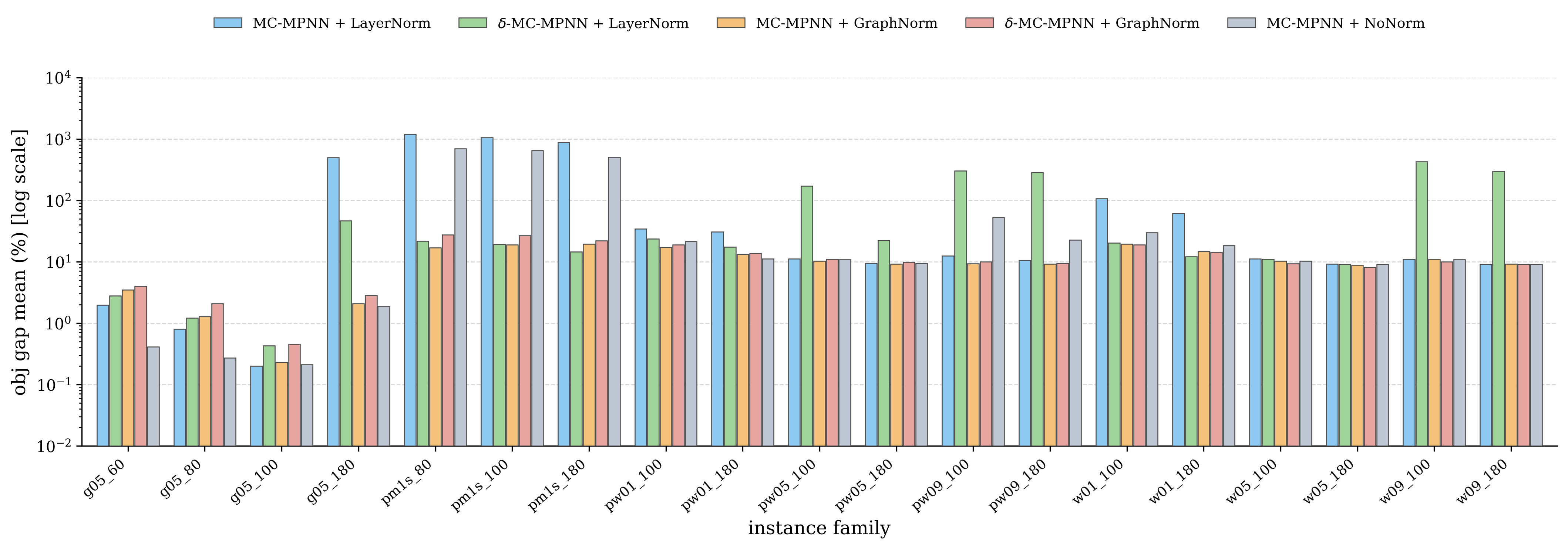} 
    \caption{Effect of normalization layers and layer variants on objective gaps, trained within the distribution of \texttt{g05\_100} and subproblems. The average relative dual objective gaps across different instance families are shown in the plots. }
    \label{fig:obj_gap_model}
\end{figure}

\begin{figure}
    \centering
    \includegraphics[width=\textwidth]{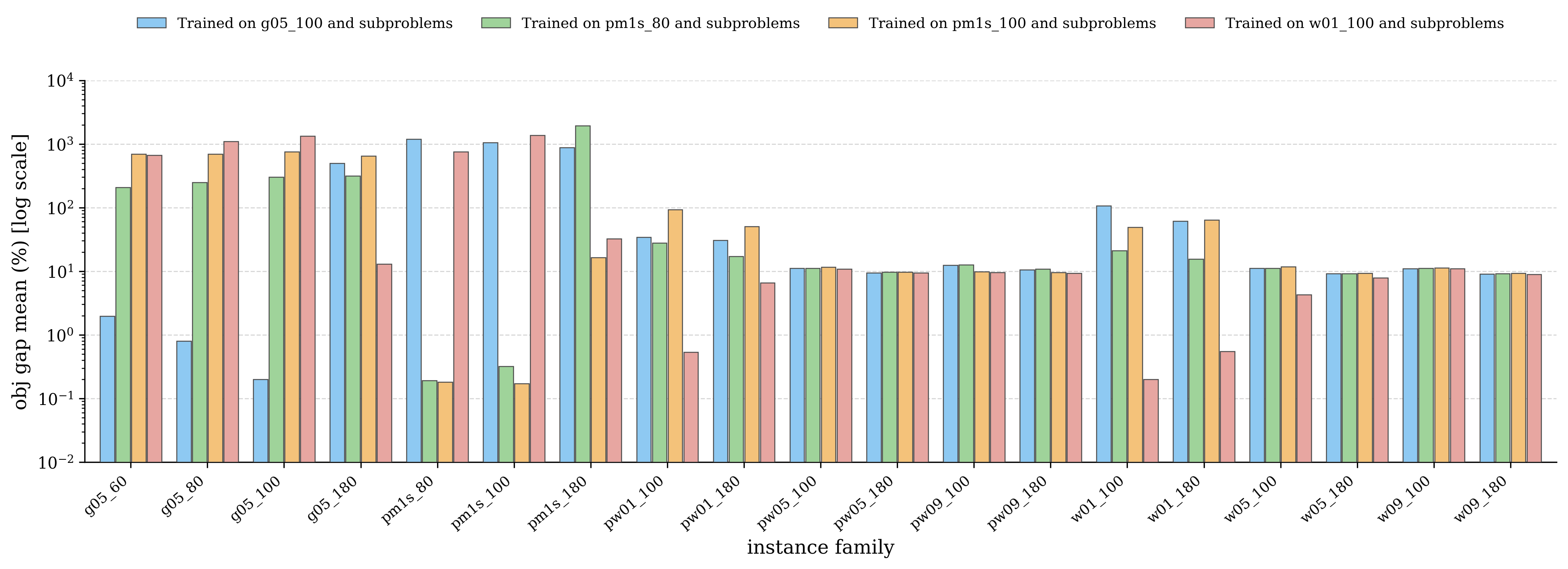} 
    \caption{Effect of training distribution on objective gaps, with \textsf{LayerNorm} and $\mcwl$. The average relative dual objective gaps across different instance families are shown in the plots. }
    \label{fig:obj_gap_dist}
\end{figure}

\subsection{Ablation: Branch-and-bound experiments without Goemans--Williamson algorithm}
\label{sec:ab_heur_free}
We consider B\&B experiments without Goemans--Williamson algorithm to evaluate the impact of learned upper bounds only. By explicitly setting the lower bound to the optimal objective value at the root node, we eliminate the role of primal heuristics in the comparison. The following experiments are performed on the AMD EPYC 7662 CPU and NVIDIA A100 GPU.

\begin{table*}[htb!]
\caption{Branch-and-bound results without Goemans--Williamson algorithm on established instances. The Hybrid and GNN methods utilize the $\mcmpnn$ + $\mathsf{LayerNorm}$ model. The best results across all configurations are highlighted.}
\centering
\label{tab:ab_heur_free}
\resizebox{0.8\textwidth}{!}{
\begin{tabular}{lcccc}
\toprule
Instance Family & Method & Top-K & \# Nodes Evaluated & Solving Time (s) \\
\midrule
\multirow{4}{*}{\texttt{g05\_60}} 
& Vanilla  & -  & 3566 & 36.4 \\ 
& Hybrid & 1  & 3566 & 44.8 \\ 
& Neural    & 1  & 4562 & 42.5 \\ 
& Neural    & 32 & \textbf{4563} & \textbf{15.1} \\ 
\midrule

\multirow{4}{*}{\texttt{g05\_80}} 
& Vanilla  & -  & 49128 & 796.8 \\ 
& Hybrid & 1  & 49128 & 805.0 \\ 
& Neural    & 1  & 53970 & 507.8 \\ 
& Neural    & 32 & \textbf{53975} & \textbf{224.4} \\ 
\midrule

\multirow{4}{*}{\texttt{g05\_100}} 
& Vanilla  & -  & 1087290 & 25580.0 \\ 
& Hybrid & 1  & 1087290 & 26215.0 \\ 
& Neural    & 1  & 1272820 & 12508.0 \\ 
& Neural    & 32 & \textbf{1272807} & \textbf{6116.3} \\ 

\bottomrule
\end{tabular}
}
\end{table*}

\subsection{Ablation: Additional branch-and-bound experiments with Goemans--Williamson algorithm}
\label{sec:ab_additional_bab}
This section reports results of additional branch-and-bound experiments in the standard setting. In particular, we explore and compare the impact of different GNN variants, normalization choices, and top-$K$ selection on the neural solver. The following experiments are performed on the AMD EPYC 7662 CPU and NVIDIA L40S GPU.

As an additional reference, we report results using BiqCrunch \citep{biqcrunch}, a highly optimized state-of-the-art exact solver for Max-Cut. It is crucial to note, however, that the full BiqCrunch algorithm is \textbf{not directly comparable} to our proposed framework. BiqCrunch is a sophisticated branch-and-cut solver whose main performance advantage comes from dynamically strengthening the continuous SDP relaxation with separated triangle inequalities. Rather than adding all such inequalities upfront, it iteratively identifies the most violated ones and maintains a compact working set of useful cuts. The resulting strengthened SDP bounds are computed efficiently using a specialized quasi-Newton method. In contrast, the core algorithmic contribution of our work is a neural surrogate designed strictly to accelerate the evaluation of the base SDP relaxation during node enumeration, rather than to replace polyhedral cutting-plane machinery. \textbf{Extending our GNN architecture to explicitly incorporate or predict such triangle inequalities remains a promising direction for future work}.

To establish a fair comparison of the underlying bounding mechanisms, we additionally evaluated BiqCrunch with its triangle inequalities disabled. Under this pure B\&B setting, BiqCrunch becomes significantly less competitive than both our baseline and our neural solver. Without the benefit of the cuts, its quasi-Newton optimizer is less efficient at solving the basic unstrengthened SDP than the interior-point methods utilized by Mosek. Therefore, the vanilla B\&B solver equipped with Mosek remains the most appropriate, direct, and rigorous baseline for evaluating the acceleration provided by our neural SDP surrogate.


Across the near-distribution families \texttt{g05\_60} \cref{tab:bab_g5_60} and \texttt{g05\_80} \cref{tab:bab_g05_80}, as well as the in-distribution \texttt{g05\_100} \cref{tab:bab_g05_100} instances, the combination of the $\mcmpnn$ variant and \textsf{LayerNorm} consistently yields the strongest branch-and-bound performance. Notably, the neural solver successfully generalizes to smaller subproblems without requiring retraining, demonstrating robust inference efficiency. The practical computational advantage becomes increasingly pronounced as the graph size grows, the neural solver achieves a substantial $5.58\times$ speedup on \texttt{g05\_60}, and reaches an impressive $9.1\times$ acceleration over the vanilla solver on the \texttt{g05\_100} instances.

Interestingly, even though \textsf{NoNorm} can achieve a smaller objective gap than \textsf{LayerNorm} at the initial root node see~\cref{fig:obj_gap_model}, \textsf{LayerNorm} performs significantly better once the GNN is dynamically deployed deeper inside the branch-and-bound tree. This implies that \textsf{LayerNorm} is highly effective at keeping the feature distributions stable and closer to the training distribution during subsequent branching steps. Furthermore, the batched comparison confirms that batched node evaluation is another major source of acceleration. However, this speedup does not scale linearly with $K$; while increasing $K$ continues to improve total running time, it yields diminishing returns once $K$ is sufficiently large, e.g., $K=32$.

Finally, for extended evaluations demonstrating the solver's performance on sparser and out-of-distribution instance families, please refer to the results for \texttt{pm1s\_80} \cref{tab:bab_pm1s_80}, \texttt{pm1s\_100} \cref{tab:bab_pm1s_100}, and \texttt{w01\_100} \cref{tab:bab_w01_100} detailed below.

\begin{table*}[htb!]
\caption{Additional branch-and-bound results on \texttt{g05\_60} instance family. The neural network is trained on graphs in distribution of \texttt{g05\_100}. The best results across all neural solver configuration are highlighted.}
\centering
\label{tab:bab_g5_60}
\resizebox{0.9\textwidth}{!}{
\begin{tabular}{cccccc}
\toprule
 Method & Top-K & \# Nodes Evaluated & Solving Time (s) & Speed-up ($\times$) \\
\midrule
 Vanilla & - 
& 3566   %
& 53.1   & 1.00 \\ %
 BiqCrunch (w/o cuts) & - 
& 3918   %
& 153.9   & 0.34 \\ %
 BiqCrunch (w/ cuts) & - 
& 7   %
& 2.3   & 26.50 \\ %
\midrule

\multirow{3}{*}{$\mcmpnn$ + $\mathsf{LayerNorm}$}  & 1
&   4562  %
&   34.2  & 1.55\\ %
 & 8
&    4562 %
&   10.0  & 5.30 \\ %
 & 32
& \textbf{4562}   %
& \textbf{9.5}  & \textbf{5.58} \\ %
\midrule

\multirow{3}{*}{$\dmpnn$ + $\mathsf{LayerNorm}$} & 1
&  4617   %
& 33.6 & 1.58 \\ %
 & 8
&  4618 %
&  10.3 & 5.15 \\ %
 & 32
&  4618  %
&  9.7 & 5.46 \\ %
\midrule

\multirow{3}{*}{$\mcmpnn$ + $\mathsf{GraphNorm}$} & 1
&  4663 
&  42.2 & 1.26 \\
 & 8
&  4663 
&  11.3 & 4.69 \\
 & 32
&   4663
&  10.4 & 5.10\\
\midrule

\multirow{3}{*}{$\mcmpnn$ + $\mathsf{NoNorm}$} & 1
&   5291
& 39.3 & 1.35 \\
 & 8
&   5291
&   11.8 & 4.49 \\
 & 32
&  5291 
&  11.0  & 4.82 \\
\midrule

\multirow{3}{*}{$\mcmpnn$ + $\mathsf{SpatialNorm}$} & 1
&  5211 
&  41.8 & 1.27\\
 & 8
&  6511
&   14.7 & 3.61\\
 & 32
&   9600
&  19.3 & 2.75 \\
\bottomrule
\end{tabular}
}
\end{table*}

\begin{table*}[htb!]
\caption{Additional branch-and-bound results on \texttt{g05\_80} instance family. The neural network is trained on graphs in the distribution of \texttt{g05\_100}. The best results across all neural solver configurations are highlighted.}
\centering
\label{tab:bab_g05_80}
\resizebox{0.9\textwidth}{!}{
\begin{tabular}{cccccc}
\toprule
 Method & Top-K & \# Nodes Evaluated & Solving Time (s) & Speed-up ($\times$) \\
\midrule
 Vanilla & - 
& 49128    %
& 1170.3  & 1.00  \\ %
 BiqCrunch (w/o cuts) & -
& 48633   %
& 3746.3 & 0.31    \\ %
 BiqCrunch (w/ cuts) & -
& 49   %
& 22.0 & 53.20    \\ %
\midrule

\multirow{3}{*}{$\mcmpnn$ + $\mathsf{LayerNorm}$} & 1
&   53969  %
&   444.9 & 2.63  \\ %
 & 8
&   53973  %
&   156.3 & 7.49  \\ %
 & 32
&  \textbf{53975}  %
&  \textbf{155.7} & \textbf{7.52}  \\ %
\midrule

 \multirow{3}{*}{$\dmpnn$ + $\mathsf{LayerNorm}$} &1
&  64437   %
&  494.7 & 2.37   \\ %
 &8
&  64461   %
&   175.2 & 6.68  \\ %
 &32
&  64474  %
&   179.8 & 6.51 \\ %
\midrule

 \multirow{3}{*}{$\mcmpnn$ + $\mathsf{GraphNorm}$} &1
&  61250   %
&  595.7 & 1.96  \\ %
 &8
&   61243  %
&   197.1 & 5.94  \\ %
 &32
& 61251   %
&  203.3 & 5.76  \\ %
\midrule

 \multirow{3}{*}{$\mcmpnn$ + $\mathsf{NoNorm}$} &1
&  65593 
&  581.1 & 2.01 \\
 &8
&   65596
&   190.5 & 6.14 \\
 & 32
&  65596 
&  189.0 & 6.19 \\
\midrule

 \multirow{3}{*}{$\mcmpnn$ + $\mathsf{SpatialNorm}$} & 1
&  59094 
&  521.8 & 2.24 \\
 & 8
&  77691 
&  213.4 & 5.48  \\
 & 32
&  120523 
&  314.3 & 3.72 \\
\bottomrule
\end{tabular}
}
\end{table*}

\begin{table*}[htb!]
\caption{Additional branch-and-bound results on \texttt{g05\_100} instance family. The neural network is trained on graphs in the distribution of \texttt{g05\_100}. The best results across all neural solver configurations are highlighted.}
\centering
\label{tab:bab_g05_100}
\resizebox{0.9\textwidth}{!}{
\begin{tabular}{cccccc}
\toprule
 Method & Top-K & \# Nodes Evaluated & Solving Time (s) & Speed-up ($\times$) \\
\midrule
 Vanilla & - 
& 1087290    %
& 40930.1   & 1.00 \\ %
 BiqCrunch (w/ cuts) & -
& 390   %
& 270.7 & 151.20   \\ %
\midrule

 \multirow{2}{*}{$\mcmpnn$ + $\mathsf{LayerNorm}$} &1
& 1272822    %
& 11791.0 & 3.47   \\ %
 &32
&  \textbf{1272801}  %
&  \textbf{4509.3} & \textbf{9.08}  \\ %
\midrule

 $\dmpnn$ + $\mathsf{LayerNorm}$ &32
&   1660692 %
&    5716.0 & 7.16 \\ %
 $\mcmpnn$ + $\mathsf{GraphNorm}$ &32
&  1326781
&  5438.9 & 7.53  \\
\bottomrule
\end{tabular}
}
\end{table*}

\begin{table*}[htb!]
\caption{Additional branch-and-bound results on \texttt{pm1s\_80} instance family. The neural network is trained on graphs in the distribution of \texttt{pm1s\_80}. The best results across all neural solver configurations are highlighted.}
\centering
\label{tab:bab_pm1s_80}
\resizebox{0.9\textwidth}{!}{
\begin{tabular}{cccccc}
\toprule
 Method & Top-K & \# Nodes Evaluated & Solving Time (s) & Speed-up ($\times$) \\
\midrule

 Vanilla & - 
&  25656   %
&   609.5  & 1.00 \\ %
 BiqCrunch (w/ cuts) & -
& 2   %
& 1.2 & 507.92  \\ %
\midrule

 \multirow{2}{*}{$\mcmpnn$ + $\mathsf{LayerNorm}$} & 1
&  27809   %
&  212.5 & 2.87  \\ %
 & 32
&   \textbf{27809} %
&    \textbf{80.8} & \textbf{7.54} \\ %
\bottomrule
\end{tabular}
}
\end{table*}

\begin{table*}[htb!]
\caption{Additional branch-and-bound results on \texttt{pm1s\_100} instance family. The neural network is trained on graphs in the distribution of \texttt{pm1s\_100}. The best results across all neural solver configurations are highlighted.}
\centering
\label{tab:bab_pm1s_100}
\resizebox{0.9\textwidth}{!}{
\begin{tabular}{cccccc}
\toprule
 Method & Top-K & \# Nodes Evaluated & Solving Time (s) & Speed-up ($\times$) \\
\midrule

 Vanilla & - 
& 254408    %
& 10382.5  & 1.00  \\ %
 BiqCrunch (w/ cuts) & -
& 13   %
& 11.7  & 887.39  \\ %
\midrule

\multirow{2}{*}{$\mcmpnn$ + $\mathsf{LayerNorm}$} & 1
&  302311  %
&  2518.5 & 4.12  \\ %
& 32
&  \textbf{302314}  %
&  \textbf{1058.2} & \textbf{9.81} \\ %
\bottomrule
\end{tabular}
}
\end{table*}

\begin{table*}[htb!]
\caption{Additional branch-and-bound results on \texttt{w01\_100} instance family. The neural network is trained on graphs in the distribution of \texttt{w01\_100}. The best results across all neural solver configurations are highlighted.}
\centering
\label{tab:bab_w01_100}
\resizebox{0.9\textwidth}{!}{
\begin{tabular}{cccccc}
\toprule
 Method & Top-K & \# Nodes Evaluated & Solving Time (s) & Speed-up ($\times$) \\
\midrule

 Vanilla & - 
& 269407    %
& 11348.6  & 1.00  \\ %
 BiqCrunch (w/ cuts) & -
& 4   %
& 4.1 & 2767.95   \\ %
\midrule

\multirow{2}{*}{$\mcmpnn$ + $\mathsf{LayerNorm}$} & 1
&  305716  %
&  2689.8 & 4.22 \\ %
& 32
& \textbf{305720}   %
&  \textbf{1073.9} & \textbf{10.57}  \\ %
\bottomrule
\end{tabular}
}
\end{table*}



\end{document}